\documentclass[letterpaper, 10 pt, conference]{ieeeconf}
\IEEEoverridecommandlockouts
\usepackage{xurl}
\usepackage{float}
\usepackage{xcolor}
\makeatletter\let\NAT@parse\undefined\makeatother
\usepackage[pdfa,colorlinks,bookmarksopen,bookmarksnumbered,allcolors=blue!90!black,urlcolor=blue]{hyperref}

\usepackage{array}
\usepackage{graphicx}
\usepackage{svg}
\usepackage{tabularx}
\usepackage{multirow}
\usepackage{booktabs}
\usepackage{colortbl}
\usepackage{subcaption}
\usepackage{amsmath}
\usepackage{amsfonts}
\usepackage{amssymb}
\usepackage{mathtools}
\usepackage{bm}

\usepackage[noadjust,space]{cite}

\usepackage[nameinlink,capitalise]{cleveref}
\crefname{line}{line}{lines}
\crefname{figure}{Fig.}{Figs.}
\Crefname{figure}{Fig.}{Figs.}
\crefname{equation}{Eq.}{Eqs.}
\Crefname{equation}{Eq.}{Eqs.}
\crefname{section}{Sec.}{Secs.}
\Crefname{section}{Sec.}{Secs.}
\crefname{definition}{Def.}{Defs.}
\Crefname{definition}{Def.}{Defs.}
\crefname{algorithm}{Alg.}{Algs.}
\Crefname{algorithm}{Alg.}{Algs.}
\crefname{assumption}{Asm.}{Asms.}
\Crefname{assumption}{Asm.}{Asms.}
\crefname{subassumption}{Asm.}{Asms.}
\Crefname{subassumption}{Asm.}{Asms.}
\Crefname{problem}{Problem}{Problems}
\crefname{problem}{Problem}{Problems}
\crefname{remark}{Rem.}{Rems.}
\Crefname{remark}{Rem.}{Rems.}
\crefname{corollary}{Cor.}{Cors.}
\Crefname{corollary}{Cor.}{Cors.}
\crefname{theorem}{Thm.}{Thms.}
\Crefname{theorem}{Thm.}{Thms.}
\crefname{lemma}{Lem.}{Lems.}
\Crefname{lemma}{Lem.}{Lems.}
\crefname{proposition}{Prop.}{Props.}
\Crefname{proposition}{Prop.}{Props.}

\usepackage{amsthm}
\newtheorem{theorem}{Theorem}
\newtheorem{proposition}{Proposition}
\newtheorem{lemma}{Lemma}

\newtheorem{assumption}{Assumption}

\usepackage[ruled,vlined,linesnumbered]{algorithm2e}
\crefname{algocf}{Alg.}{Algs.}
\Crefname{algocf}{Alg.}{Algs.}

\SetCommentSty{mycommfont}
\SetKwComment{Comment}{\(\triangleright\)\ }{}
\let\oldnl\nl
\newcommand{\nonl}{\renewcommand{\nl}{\let\nl\oldnl}}

\usepackage{siunitx}
\usepackage{pbalance}
\usepackage[export]{adjustbox}
\usepackage{wrapfig}
\usepackage[acronym]{glossaries}

\usepackage[
  activate   = {true},
  protrusion = false,
  expansion  = true,
  kerning    = true,
  spacing    = true,
  tracking   = false,
  auto       = true,
  selected   = true,
  factor     = 1000,
  stretch    = 10,
  shrink     = 10,
]{microtype}

\makeatletter
\newcommand\notsotiny{\@setfontsize\notsotiny\@vipt\@viipt}
\newcommand{\removelatexerror}{\let\@latex@error\@gobble}
\makeatother

\newcommand{\saiAns}[1]{\textcolor{blue}{\textbf{[For Sai]}~#1}}

\renewcommand{\saiAns}[1]{}

\DeclareMathOperator*{\argmin}{\arg\!\min}

\renewcommand{\H}[2][]{
\ifthenelse {\equal{#1}{}}
{\mathbb{H}\left[#2\right]}
{\mathbb{H}_{#1}\left[#2\right]}}

\newcommand{\E}[2][]{
\ifthenelse {\equal{#1}{}}
{\mathbb{E}\left[#2\right]}
{\mathbb{E}_{#1}\left[#2\right]}}

\def\1{\bm{1}}

\def\vq{{\bm{q}}}

\def\vu{{\bm{u}}}
\def\vv{{\bm{v}}}

\def\mC{{\bm{C}}}

\def\mG{{\bm{G}}}

\def\mQ{{\bm{Q}}}

\def\mV{{\bm{V}}}

\DeclareMathAlphabet{\mathsfit}{\encodingdefault}{\sfdefault}{m}{sl}
\SetMathAlphabet{\mathsfit}{bold}{\encodingdefault}{\sfdefault}{bx}{n}

\def\gA{{\mathcal{A}}}

\def\gE{{\mathcal{E}}}

\def\gG{{\mathcal{G}}}
\def\gH{{\mathcal{H}}}

\def\gK{{\mathcal{K}}}

\def\gP{{\mathcal{P}}}

\def\gX{{\mathcal{X}}}

\def\sR{{\mathbb{R}}}

\makeatletter
\newcommand{\StatexIndent}[1][3]{%
  \setlength\@tempdima{\algorithmicindent}%
  \Statex\hskip\dimexpr#1\@tempdima\relax}
\makeatother

\newcommand{\citet}{\cite}
\newacronym{poe}{PoE}{product of experts}
\newacronym{kld}{KL divergence}{Kullback–Leibler Divergence}
\newacronym{kl}{KL}{Kullback–Leibler}
\newacronym{map}{MAP}{maximum a Posteriori}
\newacronym{rmp}{RMP}{Riemannian Motion Policies}
\newacronym{gmm}{GMM}{Gaussian Mixture Model}
\newacronym{mpc}{MPC}{Model Predictive Control}
\newacronym{gp}{GP}{Gaussian Process}
\newacronym{ot}{OT}{Optimal Transport}

\newacronym{dmdp}{DMDP}{Deterministic Markov Decision Process}
\newacronym{mdp}{MDP}{Markov Decision Process}
\newacronym{vi}{VI}{Value Iteration}
\newacronym{dof}{DoF}{degrees of freedom}
\newacronym{sdf}{SDF}{Signed Distance Field}
\newacronym{mpot}{MPOT}{Motion Planning via Optimal Transport}
\newacronym{chomp}{CHOMP}{Covariant Hamiltonian Optimization for Motion Planning}
\newacronym{gpmp}{GPMP}{Gaussian Process Motion Planning}
\newacronym{stomp}{STOMP}{Stochastic Trajectory Optimization for Motion Planning}
\newacronym{sgpmp}{SGPMP}{Stochastic Gaussian Process Motion Planning}
\newacronym{bps}{BPS}{Batch Polytope Search}

\newif\ifanonymous

\newif\ifextended
\extendedfalse

\title{\LARGE \bf
  Anytime Global Tensor Motion Planning
}
\ifanonymous
\author{Anonymous Authors
  \thanks{Anonymized}%
  \thanks{}%
  \thanks{}%
  \thanks{}%
}
\else
\author{Sai Coumar\(^{1}\), An T. Le\(^{2,3,*}\), and Zachary Kingston\(^{1,*}\)%
  \thanks{\(^{*}\)Equal contribution.}%
  \thanks{\(^{1}\)Department of Computer Science, Purdue University, USA.}%
  \thanks{\(^{2}\)College of Engineering and Computer Science, VinUniversity, Hanoi, Vietnam.}%
  \thanks{\(^{3}\)Intelligent Autonomous Systems Lab, TU Darmstadt, Germany.}%
}
\fi

\begin{document}

\maketitle
\thispagestyle{empty}
\pagestyle{empty}

\begin{abstract}
  Global Tensor Motion Planning (GTMP) solves motion planning with batched tensor operations over a layered multipartite graph.
  We generalize GTMP so that adjacent-layer edges are realized by any black-box local planner (e.g., linear interpolation, splines, sampling-based planning, trajectory optimization, or generative sampling).
  We provide two anytime policies on top of this generalization: Anytime GTMP with random restarts at a fixed budget, which covers every homotopy class almost surely, and AO-GTMP with informed expansion with growing budgets, which converges to the optimal cost.
  We prove that a single sampled graph covers every endpoint-fixed homotopy class admitting a \(\delta\)-clear representative of bounded length.
  We also prove that additional samples per layer reduce the per-layer miss probability exponentially, whereas stronger local planners reduce the required layer count only sublinearly.
  On manipulation benchmarks the method matches state-of-the-art performance, and on 2D navigation it returns batches of topologically diverse solutions, while the informed baselines concentrate on one or two classes.
\end{abstract}

\begin{figure*}[t]
  \centering
  \includegraphics[width=0.4965\textwidth]{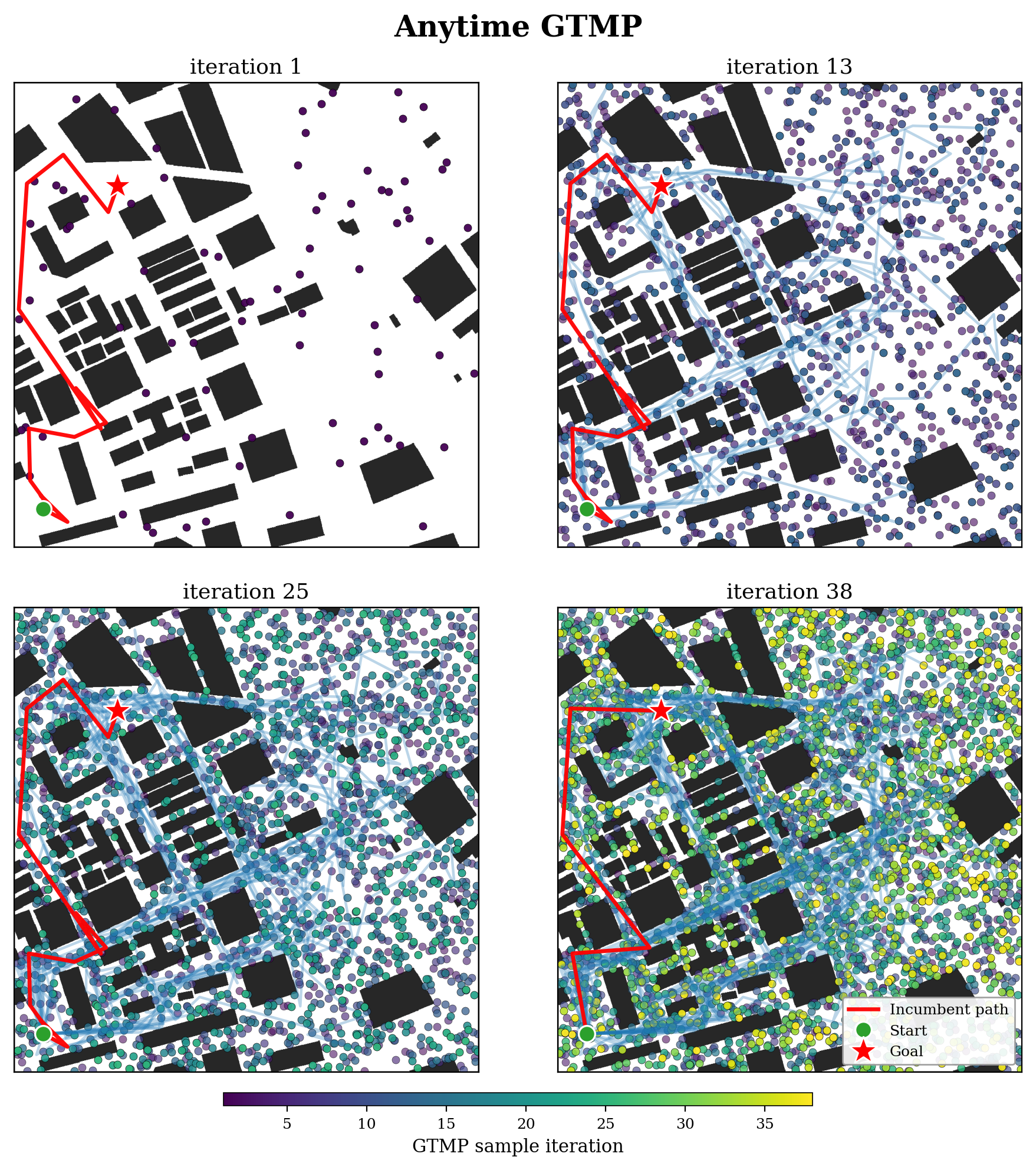}\hspace{0.006\textwidth}%
  \includegraphics[width=0.4965\textwidth]{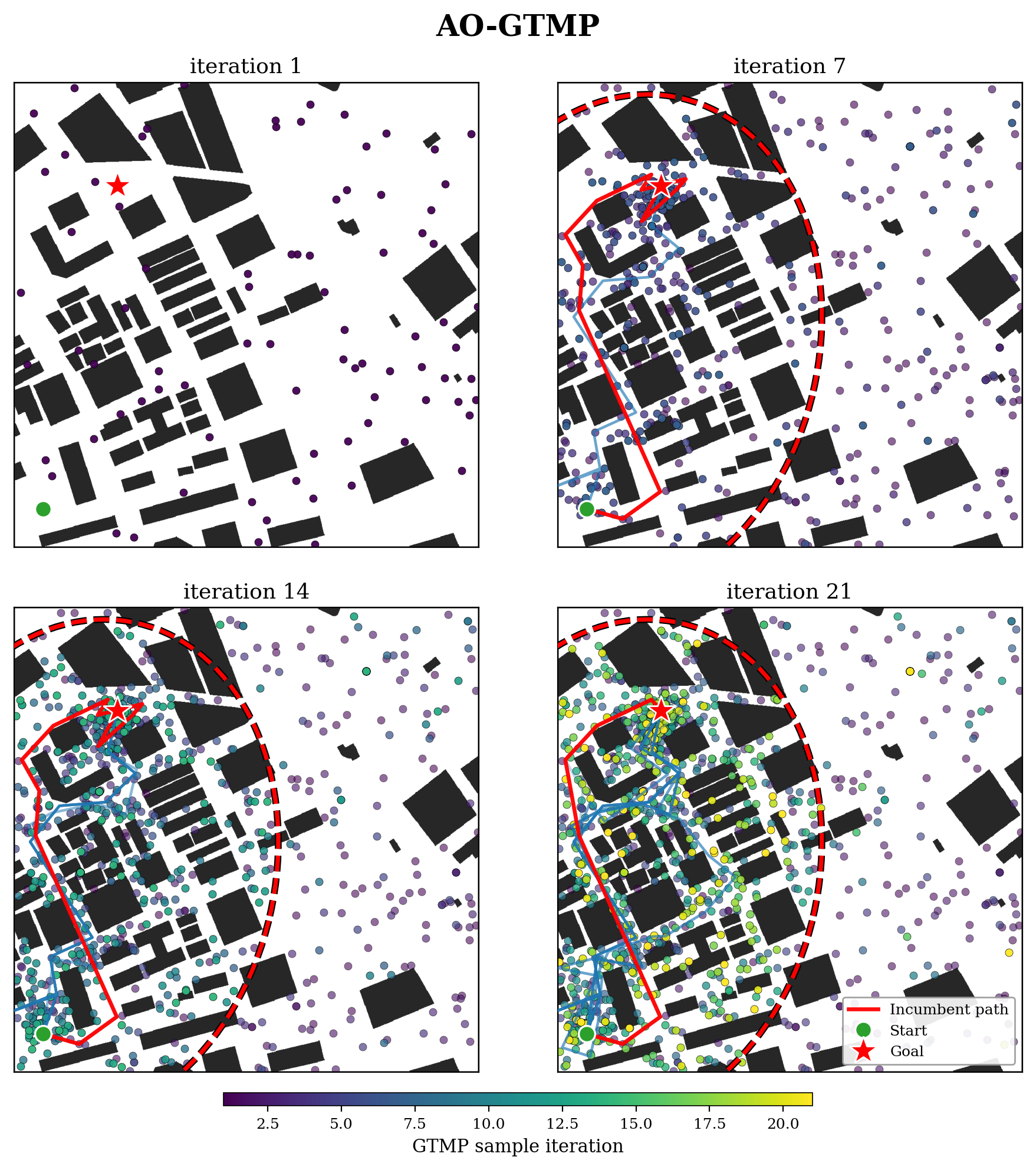}
  \caption{Sampling density of Anytime-GTMP (left) and AO-GTMP (right) on the Shanghai map.}
  \label{fig:cover}
\end{figure*}
\section{Introduction} \label{sec:intro}

Motion planning in cluttered, high-dimensional spaces requires finding the connected components of the configuration space and connecting configurations within them~\cite{orthey2023sampling}.
Classical sampling-based planners do both in one sequential search, which couples global exploration to local connection failure.
When several routes exist, downstream applications need \emph{topologically diverse} alternatives rather than one feasible path: navigation selects among routes around obstacles, manipulation exploits distinct approach paths, and task-level planning reasons over the alternatives.
Existing topological planners reason about homotopy classes through sequential roadmap construction~\cite{schmitzberger2002hppr,jaillet2008deformation,novak2023ctopprm}, without coverage guarantees.

Global Tensor Motion Planning (GTMP)~\cite{le2025global} recasts sampling-based planning as two batch-parallel operations, sampling the intermediate vertex layers and realizing all adjacent-layer edges, so many candidate paths are evaluated at once.
We generalize GTMP with black-box \emph{local planners} (LPs), admitting straight-line interpolation, sampling-based planners, trajectory optimizers, and generative samplers.
We prove that this method covers \emph{every} homotopy class admitting a \(\delta\)-clear representative of bounded length.
Our key insight is that a tube of clearance \(\delta\) around a reference path guarantees coverage: the layer samples explore globally, the local planner connects locally, and any path that stays inside the tube lies in the reference's homotopy class.
Moreover, random restarts at a fixed budget achieves almost-sure class coverage (Anytime-GTMP), and informed expansion with growing budgets gives almost-sure cost convergence (AO-GTMP).

We validate multi-class coverage on 2D navigation benchmarks, where Anytime-GTMP returns batches of topologically diverse solutions while the baselines return one path per query.
\Cref{fig:cover} contrasts the two modes: random restart explores new corridors, while asymptotic optimality concentrates samples in the shrinking informed set.
On MotionBenchMaker~\cite{chamzas2022-motion-bench-maker} with 6--8 degree-of-freedom (DoF) manipulators, our variants match state-of-the-art performance and attain the lowest mean path cost on 15 of 21 problems within 60 seconds.
\ifanonymous
    We release our code as open source at \url{https://anonymous.4open.science/r/anytime_gtmp-61B2}.
\else
    We release our code as open source at \url{https://github.com/commalab/anytime_gtmp.git}.
\fi
\section{Related Work} \label{sec:related_works}

Sampling-based planners randomly sample the configuration space and use a local planner to connect nearby configurations~\cite{orthey2023sampling}.
Narrow passages are difficult as the probability of exploration decays exponentially with dimension, which motivates biased sampling~\cite{hsu2003bridge,qureshi2021mpnet,carvalho2024motion}.
The choice of local planner also affects performance~\cite{amato2000choosing}: prior work has put other planners~\cite{nielsen2000two,hauser2011randomized} and learned functions~\cite{faust2018prm} in that role, but each choice is fixed inside the planner.
We extend GTMP with black-box local planners and prove graph-level coverage and convergence guarantees that hold for any of them.

To obtain almost-sure cost convergence, RRT*~\cite{karaman2011sampling} rewires the search tree while planners like BIT* and AIT*~\cite{gammell2020bit,strub2022adaptively} search batches of samples ordered by potential solution quality within the informed set~\cite{gammell2018informed}.
Unlike methods that densify the configuration-space approximation, the AO-x framework~\cite{hauser2016asymptotically,kleinbort2020refined} is a general recipe: any feasible planner can be made asymptotically optimal by iteratively tightening a cost bound in an augmented state-cost space, as AORRTC~\cite{wilson2025aorrtc} does with RRT-Connect.
AO-GTMP applies the same cost-bounding strategy to a layered DAG, replacing nearest-neighbor queries with a tensor value-iteration sweep.

Sampling-based planners admit natural parallelism~\cite{amato1999probabilistic}.
CPU planners exploit it by running independent instances~\cite{otte2013c}, growing trees in parallel~\cite{ichnowski2012parallel}, maintaining forests~\cite{Plaku2005,jacobs_scalable_method_2012}, and vectorizing collision checking~\cite{thomason2024motions}, and GPU planners such as GMT~\cite{ichter2017group}, pRRTC~\cite{huang2025prrtc}, and cuRobo~\cite{sundaralingam2023curobo} parallelize similarly at scale.
GTMP~\cite{le2025global} instead parallelizes through tensor operations over a layered multipartite graph, enabling batch-parallel edge realization.

When topologically distinct routes exist, downstream tasks benefit from a set of alternatives rather than one path.
The h-signature~\cite{bhattacharya2012topological} and persistent-homology descriptors~\cite{pokorny2016topological} classify paths by homotopy, and several planners use these invariants to construct diverse solutions: homotopy-preserving roadmaps~\cite{schmitzberger2002hppr}, path-deformation methods~\cite{jaillet2008deformation}, diverse-short-path algorithms~\cite{voss2015diverse,orthey2019motion}, convex-dissection approaches~\cite{liu2023convex}, CTopPRM~\cite{novak2023ctopprm}, and topology-driven trajectory optimization~\cite{degroot2024topology,osa2020multimodal}.
These methods build or search incrementally and give no coverage guarantee; the closest quantitative results are sample-complexity bounds for PRM connectivity via \(\varepsilon\)-nets~\cite{tsao2020sample}, which do not distinguish homotopy classes.
We guarantee that one graph covers every \(\delta\)-clear class at once, and Anytime-GTMP makes coverage almost-sure.
\section{Problem Definition} \label{sec:problem}

Let \(C\subseteq \mathbb{R}^d\) be the configuration space, with open free space \(C_{\mathrm{free}}\), obstacle region \(C_{\mathrm{coll}} = \mathbb{R}^d\setminus C_{\mathrm{free}}\), and start and goal configurations \(q_0,q_g\in C_{\mathrm{free}}\).
A continuous path is a map \(f:[0,1]\to C_{\mathrm{free}}\) with \(f(0)=q_0\) and \(f(1)=q_g\), and its image is \(\mathrm{Im}(f) = f([0,1])\).
A path \(f\) is \emph{\(\delta\)-clear} if
\(\operatorname{dist}(f(t),C_{\mathrm{coll}}) \ge \delta\) for all \(t\in[0,1]\)~\cite{karaman2011sampling}; equivalently, \(N_\delta(\mathrm{Im}(f))\subset C_{\mathrm{free}}\).
A homotopy class is \emph{\(\delta\)-clear} if it admits a \(\delta\)-clear representative.
A path is \emph{rectifiable} if it has finite Euclidean length. We write \(\mathrm{TV}(f)\) for that length.
Two paths \(f,g:[0,1]\to C_{\mathrm{free}}\) with the same endpoints are \emph{homotopic relative to endpoints},
written \(f\simeq g\), if there exists a continuous map
\(H:[0,1]\times[0,1]\to C_{\mathrm{free}}\) such that \(H(\cdot,0)=f\), \(H(\cdot,1)=g\),
\(H(0,s)=q_0\), and \(H(1,s)=q_g\) for all \(s\in[0,1]\).
The homotopy class of \(f\) is denoted by \([f]\).

\subsection{Layered Sampling Model}
Fix an integer \(M\ge 0\) and define the uniform grid on \([0,1]\)
\[
  t_m = \frac{m}{M+1}, \qquad m=0,1,\dots,M+1.
\]
For each intermediate layer \(m\in\{1,\dots,M\}\), let \(C_m\subseteq C\) be the \emph{sampling region} of that layer: a measurable set with nonempty interior and finite volume. Let \(\pi_m\) be a probability measure on \(C_m\cap C_{\mathrm{free}}\) with Lebesgue density at least \(a_m>0\) there.
Each of the \(N\) layer-\(m\) samples \(X_{m,1},\dots,X_{m,N}\) is drawn by an independent fair coin: with probability \(\tfrac12\) from \(\pi_m\), independently of everything else, and otherwise from any heuristic sampler, which may depend on the history.
The \(\pi_m\) draws are mutually independent within and across layers, so a layer misses a measurable set \(A\) with probability at most \((1-\tfrac12\pi_m(A))^N\) whatever the heuristic does.
Every result below uses only the \(\pi_m\) half of the mixture.
Define the layer vertex sets by:
\begin{gather*}
  V_0=\{q_0\}, \quad V_{M+1}=\{q_g\}, \quad V_m=\{X_{m,1},\dots,X_{m,N}\}.
\end{gather*}
The GTMP candidate graph contains all ordered pairs \((u,v)\) with \(u\in V_m\) and \(v\in V_{m+1}\) for some
\(m\in\{0,\dots,M\}\).
The graph is complete between adjacent layers and contains no intra-layer or skip-layer edges.

\subsection{Generalized GTMP Graph}
A \emph{local planner}~(LP), denoted \(\mathsf{LP}\), is a randomized procedure \(\mathsf{LP}(x,y;s,\ell)\) that, given \(x,y\in C_{\mathrm{free}}\), an effort budget \(s\in\mathbb{N}\), and a limit \(\ell>0\), either returns failure or returns a continuous collision-free path \(\Gamma:[0,1]\to C_{\mathrm{free}}\) with \(\Gamma(0)=x\), \(\Gamma(1)=y\), and image inside the \emph{query ellipsoid}
\[
  \gE(x,y;\ell) \;=\; \{\, z\in C_{\mathrm{free}} \mid \|z-x\|+\|z-y\| \le \ell \,\}.
\]
$\ell$ is computable from the query alone, is enforced by rejecting samples and edges outside \(\gE\), and is not restrictive: any path from \(x\) to \(y\) of length at most \(\ell\) already lies in \(\gE\), the segment \([x,y]\) included.
The corresponding graph is denoted by
\(G^{\mathsf{LP}}_{M,N,s}\).
An edge \((u,v)\) is present in \(G^{\mathsf{LP}}_{M,N,s}\) if and only if the call succeeds; the edge holds the returned local path \(\Gamma_{u,v}\) and its cost.

A \emph{chain} in \(G^{\mathsf{LP}}_{M,N,s}\) is a sequence of vertices: \[
  v_0=q_0,\ v_1,\ \dots,\ v_M,\ v_{M+1}=q_g,
\]
with \(v_m\in V_m\) and each adjacent pair \((v_m,v_{m+1})\) realized as an edge.
If the realized edges carry local paths \(\Gamma_0,\dots,\Gamma_M\), then their concatenation
\(
  P = \Gamma_0 * \cdots * \Gamma_M
\)
is a continuous collision-free path from \(q_0\) to \(q_g\).

Let \([f]\) be an endpoint-fixed homotopy class from \(q_0\) to \(q_g\).
We say that \(G^{\mathsf{LP}}_{M,N,s}\) \emph{covers} \([f]\) if it contains a chain whose
concatenated path \(P\) satisfies \( P \simeq f\).
For a finite family of classes
\(
  \mathcal{F} = \{[f^{(1)}],\dots,[f^{(K)}]\},
\)
we say that \(G^{\mathsf{LP}}_{M,N,s}\) \emph{covers \(\mathcal{F}\)} if it covers every class in the family, not
necessarily via disjoint chains.
Class membership of a returned path \(P\) is recorded by a label \(\kappa(P)\) that accumulates along the path, such as the
h-signature~\cite{bhattacharya2012topological} or a persistent-homology descriptor~\cite{pokorny2016topological}; we write \(\gK\) for the finite set of labels realized by chains of a given graph.
\begin{figure}[!t]
  \removelatexerror
  \begin{algorithm}[H]
    \caption{GTMP and our anytime variants}\label{algo:gtmp}
    \linespread{1.15}\small
    \DontPrintSemicolon
    \SetInd{0.5em}{0.5em}%
    \SetKwProg{func}{Function}{}{}%

    \func{\upshape\texttt{gtmp}\((\vq_0, \mG, s, \mQ, \mV_h)\)}{
      \KwData{Start \(\vq_0\), goals \(\mG \in \sR^{|\gG| \times d}\), local planner budget \(s\), (optional) samples \(\mQ\), value tensor \(\mV_h\)}
      \uIf{\(\mQ \equiv \emptyset\)}{
        Uniformly sample \(\mQ \in \sR^{M \times N \times d}\)\;
      }
      Compute cost matrices \(\mC_s, \mC_h, \mC_l\) via local planner \(\mathsf{LP}\) with budget \(s\) as in GTMP~\cite{le2025global}\;
      Init \(V_s \in \sR,\, \mV_g \in \sR^{|\gG|}\);\, \(\mV_h \leftarrow \mathbf{0}\) \upshape\textbf{if} \(\mV_h \equiv \emptyset\)
      \For{\(1 \leq k \leq M + 1\)}{
        \(\mV_h[M{-}1] \leftarrow \min (\mC_l + \mV_g)\);\, \(V_s \leftarrow \min (\mC_s + \mV_h[0])\)\;
        \lForEach{\(m < M{-}1\)}{\(\mV_h[m, u, \kappa_{uv}\chi] \leftarrow \min_v (\mC_h[m, u, v] + \mV_h[m{+}1, v, \chi])\)}
      }
      \(i \leftarrow \argmin (\mC_s + \mV_h^*[0])\);\, \(\gP = \{i\}\)\;
      \lFor{\(1 \leq m \leq M - 1\)}{
        \(i \leftarrow \argmin (\mC_h[m {-} 1, i] + \mV_h^*[m])\), append \(\mQ[m, i]\) to \(\gP\)
      }
      \(i \leftarrow \argmin (\mC_l[i] + \mV_g)\) and append \(\mG[i]\) to \(\gP\)\;
      \Return{Path \(\gP\),\, value tensor \(\mV_h\)}\;
    }{}

    \vspace{0.5em}
    \func{\upshape\texttt{anytime{-}gtmp}\((M, N, s)\)}{\label{algo:anytime}
      \(\gA \gets \emptyset\) \tcp*{class-indexed archive}
      \Repeat{\upshape\texttt{stop}}
      {
        \(\Pi_i \gets \)\upshape\texttt{gtmp}\((M, N, s)\) with \(\mV_h \in \sR^{M \times N \times |\gK|}\)\; \label{algo:anytime:gtmp}
        \ForEach{\(\chi \in \gK\) \upshape\textbf{with} \(\Pi_i[\chi] \not\equiv \emptyset\)}{
          \lIf{\(\chi \notin \gA\) \upshape\textbf{or} \(c(\Pi_i[\chi]) < c(\gA[\chi])\)}
          {
            \(\gA[\chi] \gets \)\upshape\texttt{simp}\((\Pi_i[\chi])\)
          }
        }
      }
      \Return{\(\gA\),\, \(\argmin_{\gP \in \gA} c(\gP)\)}\;
    }{}

    \vspace{0.5em}
    \func{\upshape\texttt{ao{-}gtmp}\((M_{\min}, M_{\max}, N_{\min}, N_{\max}, s)\)}{\label{algo:aogtmp}
      \(\gP_\textup{best} \gets \emptyset\);~
      \(c_\textup{min} \gets \infty\)\;
      \(M \gets M_{\min}\);~
      \(N \gets N_{\min}\)\;
      \(\mQ \gets \emptyset\);~
      \(\mV_h \gets \emptyset\)\; \label{algo:ao:init}
      \Repeat{\upshape\texttt{stop} \upshape\textbf{or} \((M \geq M_{\max}\) \upshape\textbf{and} \(N \geq N_{\max})\)}
      {
        \(\mQ \gets \)\upshape\texttt{expand{-}samples}\((\mQ, M, N, c_\textup{min})\)\; \label{algo:ao:expand}
        \(\gP_i, \mV_h \gets \)\texttt{simp}(\texttt{gtmp}\((M, N, s, \mQ, \mV_h))\)\; \label{algo:ao:gtmp}
        \If{\(\gP_i \not \equiv \emptyset\) \upshape\textbf{and} \(c(\gP_i) < c_\textup{min}\)}
        {
          \(\gP_\textup{best} \gets \gP_i\)\;
          \(c_\textup{min} \gets c(\gP_i)\)\;
        }
        \((M, N) \gets \)\upshape\texttt{grow}\((M, N)\)\; \label{algo:ao:grow}
      }
      \Return{\(\gP_\textup{best}\)}\;
    }{}

    \vspace{0.5em}
    \func{\upshape\texttt{expand{-}samples}\((\mQ, M, N, c_\textup{best})\)}{ \label{algo:ao:expand_fn}
      \For{layer \(m \in [M]\), \(1 \leq j \leq N - |\mQ[m]|\)}
      {
        \uIf{\(c_\textup{best} {=} \infty\) \textbf{or} \(\mQ[m{\pm}1] {=} \emptyset\) \textbf{or} \(\textup{Bernoulli}(\tfrac{1}{2})\)}
        {
          \(\vq \sim \mathcal{U}(\gX_f(c_\textup{best}))\) \tcp*{uniform half}
        }
        \lElse
        {
          \(\vq \gets \tfrac{1}{2}(\vu^\star {+} \vv^\star)\), \((\vu^\star, \vv^\star) {=} \argmin_{\mQ[m{-}1] \times \mQ[m{+}1]} \| \vu {-} \vv \|\) \label{algo:ao:pair}
        }
        \(\mQ[m] \stackrel{+}\gets \{\vq\}\)\;
      }
      \Return{\(\mQ\)}\;
    }{}
  \end{algorithm}%
  \vspace{-2em}
\end{figure}

\section{Method} \label{sec:method}
Two iteration policies run on the layered graph: \textbf{Anytime-GTMP} uses random restart with fixed budgets for homotopy-class diversity, and \textbf{AO-GTMP} grows budgets monotonically for asymptotic cost optimality.
The local planner may be any point-to-point planner; the original GTMP is the straight-line special case.

\subsection{Stagewise Graph Instantiation}
At planning stage \(\nu\in\mathbb{N}\), choose budgets \((M_\nu,N_\nu,s_\nu)\) and instantiate the layered graph from \cref{sec:problem}, yielding the realized graph \(G^{\mathsf{LP}}_\nu\).
Every call \(\mathsf{LP}(u,v;s_\nu,\ell)\) uses the edge-length limit \(\ell=\ell_{M_\nu}(r)\) of \cref{sec:homotopy_theory}, and on success assigns edge cost \(c_\nu(u,v) = \mathrm{cost}(\Gamma_{u,v})\); admissible costs are additive, at least path length, and Lipschitz in the endpoints of a straight-line edge (\cref{thm:aogtmp}), which includes path length and weighted length.
Per-stage cost is \(\mathcal{O}(M_\nu N_\nu^2)\) local planner calls, which dominates the search except for the \(|\gK|\) factor of the class-augmented sweep (\cref{sec:dp-search}).

\subsection{Graph Search on the Layered DAG}
\label{sec:dp-search}
Because GTMP connects only adjacent layers, the realized graph is a directed acyclic graph ordered by layer index, so a shortest feasible chain follows from value iteration at cost \(\mathcal{O}(M_\nu N_\nu^2)\).
Let \(\mathcal{V}_{\nu,m}\) denote the vertices in layer \(m\).
Define the terminal cost-to-go by \(J_{\nu,M_\nu+1}(q_g) = 0\), and for any vertex \(u\in\mathcal{V}_{\nu,m}\),
\(m=M_\nu,\dots,0\),
\[
  J_{\nu,m}(u) = \min_{v\in \mathcal{V}_{\nu,m+1}}
  c_\nu(u,v) + J_{\nu,m+1}(v).
\]
Any absent edge is treated as having infinite cost.
Taking \(\sigma_{\nu,m}(u)\) to be a minimizer above and applying it repeatedly from \(q_0\) recovers one minimum-cost feasible chain \(q_0 \to v_{\nu,1} \to \cdots \to v_{\nu,M_\nu} \to q_g\).
Concatenating the local planner paths yields a continuous configuration-space path:
\[
  P_\nu = \Gamma_{q_0,v_{\nu,1}} * \Gamma_{v_{\nu,1},v_{\nu,2}} * \cdots * \Gamma_{v_{\nu,M_\nu},q_g}.
\]
Anytime-GTMP augments the state with the class label, \(J_{\nu,m}(u,\chi)\) over \(\mathcal{V}_{\nu,m}\times\gK\), and sweeps the same recursion subject to \(\chi = \kappa(\Gamma_{u,v})\cdot\chi'\), which returns a minimum-cost chain in \emph{every} realized class for \(\Theta(M_\nu N_\nu^2|\gK|)\) work whenever the label accumulates along concatenation and tells the classes apart, as the h-signature does (\cref{prop:class-dp}).
Each sweep is a single batched tensor contraction.

\subsection{Two Iteration Policies}
\label{sec:two-modes}

The two policies differ only in the budget schedule: Anytime-GTMP holds \((M,N,s)\) fixed and targets class diversity, while AO-GTMP holds \(s\) fixed, grows \((M_\nu,N_\nu)\) monotonically to infinity, and targets cost optimality.
Both proceed in two phases.
\emph{Phase 1 (feasibility).} Run GTMP from \((M_1,N_1,s_1)\) until one feasible chain is found, yielding \(\sigma_0\) and the initial bound \(c_0\gets\mathrm{cost}(\sigma_0)\); with a probabilistically complete local planner this phase inherits the probabilistic completeness of GTMP.
\emph{Phase 2 (anytime).} Iterate under the chosen policy.

\Cref{algo:gtmp} gives both policies and the shared \upshape\texttt{gtmp}\(()\) subroutine, which extends canonical GTMP with the local planner budget \(s\) and optional warm-start inputs \(\mQ\) and \(\mV_h\).
The first iteration returning a feasible path is Phase~1: it seeds the archive (Anytime-GTMP) or the cost bound \(c_\textup{min}\) (AO-GTMP).
Both policies cost \(\Theta(M_\nu N_\nu^2)\) per stage, the \(2N_\nu+(M_\nu{-}1)N_\nu^2\) candidate edges being dominated by inter-layer pairs.

\subsection{Anytime-GTMP}
\label{sec:gtmp-rr}
Fix \((M, N, s)\) throughout.
Each stage \(\nu\) draws a fresh layered sample set, realizes edges with \(\mathsf{LP}_s\), runs the class-augmented DP
search, and inserts every solution into a class-indexed archive \(\mathcal{A}_\nu\) that holds the
lowest-cost path per homotopy class, updated by:
\[
  \mathcal{A}_\nu[\kappa] \gets
  \argmin_{\pi \in \mathcal{A}_{\nu-1}[\kappa] \cup \{\pi' \in \Pi_\nu \mid \kappa(\pi') = \kappa\}}
  \mathrm{cost}(\pi),
\]
with \(\mathcal{A}_0 \equiv \emptyset\), and the anytime output is the lowest-cost path across all classes.
Any positive per-stage coverage probability yields almost-sure eventual class coverage (\cref{thm:anytime-rr}).

\subsection{AO-GTMP}
\label{sec:aogtmp}
Fix \(s\) throughout; grow \((M_\nu, N_\nu)\) monotonically.
This instantiates the AO-x meta-algorithm~\cite{hauser2016asymptotically,kleinbort2020refined} on the
layered DAG.
Let \(c_\nu\) be the best cost found through stage \(\nu\) and let
\(\mathcal{X}_f(c) = \{ x \in C \mid \|x - q_0\| + \|x - q_g\| \le c \}\) be the informed set at cost bound \(c\)~\cite{gammell2018informed}, the ellipsoid with foci \(q_0,q_g\), meeting \(C_{\mathrm{free}}\) in the query ellipsoid \(\gE(q_0,q_g;c)\) of \cref{sec:problem}.
Stage \(\nu\) samples against the bound available on entry, \(c_{\nu-1}\), taking \(\pi_{\nu,m}\) of \cref{sec:problem} to be the uniform measure on \(\mathcal{X}_f(c_{\nu-1})\cap C_{\nu,m}\cap C_{\mathrm{free}}\), and the heuristic half of the mixture to be the midpoints of the closest layer-\((m{-}1)\) and layer-\((m{+}1)\) sample pairs.
Because the uniform grid shifts when \(M\) grows, AO-GTMP runs in \emph{epochs}: within an epoch \(M\) is
fixed and \(N_\nu\) increases until the per-layer hit probability saturates (the sample
schedule of \cref{thm:aogtmp}); a new epoch begins at larger \(M\) when triggered by the layer schedule:
\[
  M_\nu \;\ge\; \left\lceil \frac{c_\nu}{\Lambda_s(\tau)-2r-\varsigma}\right\rceil - 1,
\]
where the \emph{reach} \(\Lambda_s(\tau)=\sup\{\ell>0 \mid q_s(\ell)\ge1-\tau\}\) is the longest
edge the local planner solves with probability at least \(1-\tau\), for the success profile \(q_s\)
of \cref{ass:lp-profile}.
Cost convergence \(c_\nu \to c^*\) a.s.\ follows from \cref{thm:aogtmp}.

\section{Theoretical Analysis}
\label{sec:homotopy_theory}

We use the fact that a chain inside a tube of radius \(\delta\) around a reference path is homotopic to that reference.
Coverage then reduces to two questions: did every layer sample near the reference, and did the local planner connect consecutive samples.

Fix a reference path \(f:[0,1]\to C_{\mathrm{free}}\) from \(q_0\) to \(q_g\), rectifiable, parameterized at constant speed, of length \(L=\mathrm{TV}(f)\) and clearance \(\delta>0\), so \(N_\delta(\mathrm{Im}(f))\subset C_{\mathrm{free}}\).
For a radius \(r\in(0,\delta)\), the layer-\(m\) \emph{waypoint ball} \(A_m=B_r(f(t_m))\cap C_m\cap C_{\mathrm{free}}\), with \(B_r(f(t_m))\subset C_m\), collects the admissible layer-\(m\) waypoints; a single layer-\(m\) sample lands in it with probability at least \(p_m=\tfrac12\pi_m(A_m)\ge\tfrac12 a_m\lambda(B_r)\), the factor \(\tfrac12\) coming from the coin flip in the sampler (\cref{sec:problem}).
Consecutive waypoints are joined inside the tube by going in to the reference, along it, and back out, a path of length at most \(L/(M+1)+2r\) with clearance at least \(\delta-r\), so the local planner is called with the length limit:
\[
  \ell_M(r) \;=\; \frac{L}{M+1} + 2r + \varsigma
\]
for a fixed margin \(\varsigma>0\).
\begin{lemma}[\(\delta\)-tube homotopy]\label{lem:tube}
  Let \(w_0=q_0\), \(w_{M+1}=q_g\), \(w_m\in B_r(f(t_m))\), and let \(\Gamma_m\) join \(w_m\) to \(w_{m+1}\) inside \(\gE(w_m,w_{m+1};\ell_M(r))\).
  If \[
    \frac{3}{2}\frac{L}{M+1}+2r+\frac{\varsigma}{2}<\delta,
  \] then \(P=\Gamma_0*\cdots*\Gamma_M\) satisfies \(P\simeq f\) rel.\ endpoints.
  If every \(\Gamma_m\) is the straight segment \([w_m,w_{m+1}]\), the condition weakens to \(L/(M+1)+r<\delta\).
\end{lemma}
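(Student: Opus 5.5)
The plan is to build an explicit straight-line homotopy between \(P\) and \(f\). Both paths are parameterized on the common grid \(t_0<t_1<\dots<t_{M+1}\), and I will show the homotopy never leaves the open \(\delta\)-tube around \(\mathrm{Im}(f)\). That tube lies in \(C_{\mathrm{free}}\) because \(\operatorname{dist}(f(t),C_{\mathrm{coll}})\ge\delta\) for every \(t\).

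\emph{Parameterization.} I reparameterize \(P\) so that \(\Gamma_m\) is traversed affinely in time over \([t_m,t_{m+1}]\). Then \(P(t_m)=w_m\), with \(P(0)=q_0=f(0)\) and \(P(1)=q_g=f(1)\). This reparameterization does not change the homotopy class. I then define
\[
H(t,s)=(1-s)P(t)+s\,f(t).
\]
This map is continuous, fixes both endpoints for every \(s\), and satisfies
\[
\|H(t,s)-f(t)\|=(1-s)\|P(t)-f(t)\|\le\|P(t)-f(t)\|.
\]
It therefore suffices to prove \(\|P(t)-f(t)\|<\delta\) for all \(t\). In that case every \(H(t,s)\) lies in the open ball \(B_\delta(f(t))\), which avoids \(C_{\mathrm{coll}}\), so \(H\) maps into \(C_{\mathrm{free}}\).

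\emph{Pointwise bound.} Fix \(t\in[t_m,t_{m+1}]\) and put \(z=P(t)\in\gE(w_m,w_{m+1};\ell_M(r))\). Since \(\|z-w_m\|+\|z-w_{m+1}\|\le\ell_M(r)\), the nearer focus \(w_j\), with \(j\in\{m,m+1\}\), satisfies \(\|z-w_j\|\le\ell_M(r)/2\). Next, \(\|w_j-f(t_j)\|\le r\), which also holds trivially at \(j=0\) and \(j=M+1\). Because \(f\) has constant speed, \(\|f(t_j)-f(t)\|\le L|t_j-t|\le L/(M+1)\). The triangle inequality then gives
\[
\|P(t)-f(t)\|\le \frac{1}{2}\Big(\frac{L}{M+1}+2r+\varsigma\Big)+r+\frac{L}{M+1}
=\frac32\frac{L}{M+1}+2r+\frac{\varsigma}{2}<\delta ,
\]
which is exactly the hypothesis. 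Choosing whichever focus is nearer, rather than always \(w_m\), is the step that produces the factor \(\tfrac12\) on \(\ell_M(r)\), so it is where the stated constant comes from.

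\emph{Straight-segment case.} If \(\Gamma_m=[w_m,w_{m+1}]\), then \(P(t)-f(t)\) is a convex combination of \(w_m-f(t)\) and \(w_{m+1}-f(t)\). Each of these has norm at most
\[
\|w_j-f(t_j)\|+\|f(t_j)-f(t)\|\le r+\frac{L}{M+1}.
\]
Hence \(\|P(t)-f(t)\|\le L/(M+1)+r<\delta\), and the same homotopy \(H\) works.

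The main subtlety is bookkeeping rather than depth. First, the reparameterization of \(P\) must be aligned with the grid \(t_m\) so that the comparison \(P(t)\) versus \(f(t)\) is made within the same interval. Second, the strict inequality must yield membership in an open ball contained in \(C_{\mathrm{free}}\), which holds because \(\delta\)-clearance gives distance \(\ge\delta\) to the obstacle set.
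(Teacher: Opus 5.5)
Your proposal is correct and matches the paper's proof: align \(P\) with the grid so that \(\Gamma_m\) occupies \([t_m,t_{m+1}]\), bound \(\|P(t)-f(t)\|\) through the nearer focus by \(\ell_M(r)/2+r+L/(M+1)\), and conclude with the straight-line homotopy, which stays in \(B_\delta(f(t))\subset C_{\mathrm{free}}\). In the segment case you write \(P(t)-f(t)\) as a convex combination of \(w_j-f(t)\), while the paper compares against \([f(t_m),f(t_{m+1})]\); this is only a difference in bookkeeping and gives the same bound.
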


\begin{proof}
  Parameterize \(P\) at the reference speed, so that \(P|_{[t_m,t_{m+1}]}\) traverses \(\Gamma_m\), and fix \(t\in[t_m,t_{m+1}]\) with \(z=P(t)\).
  A point of \(\gE(w_m,w_{m+1};\ell_M(r))\) is within \(\ell_M(r)/2\) of one focus, each focus \(w_m\) lies within \(r\) of \(f(t_m)\), and constant speed keeps \(f(t)\) within \(L/(M+1)\) of both, so \[
    \|z-f(t)\|\le\frac{\ell_M(r)}{2}+r+\frac{L}{M+1}=\frac{3L}{2(M+1)}+2r+\frac{\varsigma}{2}<\delta.
  \]
  For the straight-line case, interpolating \([w_m,w_{m+1}]\) affinely against \([f(t_m),f(t_{m+1})]\) puts \(P(t)\) within \(r\) of that segment and the segment within \(L/(M+1)\) of \(f(t)\), so \[\|P(t)-f(t)\|\le r+\frac{L}{M+1}.\]
  In either case the homotopy \(H(t,\sigma)=(1-\sigma)f(t)+\sigma P(t)\) stays in \(B_\delta(f(t))\subset C_{\mathrm{free}}\) and fixes both endpoints.
\end{proof}

\begin{assumption}[Local planner success profile]\label{ass:lp-profile}
  Fix a clearance level \(\rho>0\).
  For each budget \(s\) there is a non-increasing \(q_s:(0,\infty)\to[0,1]\) with \(q_s(\ell)\to1\) as \(s\to\infty\) such that, whenever \(x,y\) admit a \(\rho\)-clear path of length at most \(\ell-\varsigma\), the call \(\mathsf{LP}(x,y;s,\ell)\) succeeds with probability at least \(q_s(\ell)\), conditionally on the sampling history and on the other calls.
\end{assumption}

The assumption asks only for success, since the length limit of \cref{sec:problem} already confines the path to the query ellipsoid; the clearance level, \(\rho=\delta-r\) below, is what keeps it meaningful, as no planner holds a uniform success rate over corridors of vanishing width.
RRT-Connect confined to that ellipsoid qualifies, with failure decaying as \(\exp(-\beta s\eta/\ell)\)~\cite{kleinbort2019probabilistic,karaman2011sampling} for the planner step \(\eta\) and a constant \(\beta>0\), and is \emph{short-range exact}: it tries the straight line first, so any collision-free straight-line edge is returned deterministically.

\subsection{Coverage from One Sampled Graph}
\label{sec:single-class}

\begin{theorem}[Coverage of one class]\label{thm:single-class}
  Under \cref{ass:lp-profile}, with \(r,M\) as in \cref{lem:tube} and every \(p_m>0\),
  \begin{multline*}
    \Pr\big[G^{\mathsf{LP}}_{M,N,s}\text{ covers }[f]\big] \\
    \ge\ 1 - \sum_{m=1}^M (1-p_m)^N - (M+1)\big(1-q_s(\ell_M(r))\big),
  \end{multline*}
  and the right-hand side tends to \(1\) as \(N,s\to\infty\) for every fixed \(M\).
\end{theorem}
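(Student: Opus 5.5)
We bound the failure probability by a union bound over two kinds of bad events. For each layer \(m\in\{1,\dots,M\}\), let \(E_m\) be the event that none of the \(N\) layer-\(m\) samples lands in the waypoint ball \(A_m\). For each \(m\in\{0,\dots,M\}\), let \(F_m\) be the event that the local planner call between the selected waypoints of layers \(m\) and \(m+1\) fails. We will show that on the complement of \(\bigcup_m E_m\cup\bigcup_m F_m\) the graph covers \([f]\). We then bound \(\Pr[E_m]\le(1-p_m)^N\) and \(\Pr[F_m]\le 1-q_s(\ell_M(r))\).

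\emph{Step 1 (sampling).} The \(\pi_m\)-half of the mixture is independent of the heuristic. Each draw therefore lands in \(A_m\) with probability at least \(\tfrac12\pi_m(A_m)=p_m\), independently across the \(N\) draws, and \(\Pr[E_m]\le(1-p_m)^N\) as noted in \cref{sec:problem}. On \(E_m^c\) we fix a measurable selection \(w_m\), e.g.\ the lowest-index sample in \(A_m\). We set \(w_0=q_0\) and \(w_{M+1}=q_g\).

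\emph{Step 2 (edges).} We must check that each pair \((w_m,w_{m+1})\) satisfies the hypothesis of \cref{ass:lp-profile} with \(\rho=\delta-r\) and limit \(\ell_M(r)\). The path we use goes straight from \(w_m\) to \(f(t_m)\), along \(f\) to \(f(t_{m+1})\), and straight to \(w_{m+1}\). Its length is at most \(L/(M+1)+2r=\ell_M(r)-\varsigma\). The two radial legs stay within \(r\) of \(\mathrm{Im}(f)\), and the middle piece lies on it, so the clearance is at least \(\delta-r\). The assumption then gives success with probability at least \(q_s(\ell_M(r))\) conditionally on the sampling history. Since the selected waypoints are measurable functions of that history, \(\Pr[F_m\cap\bigcap_k E_k^c]\le 1-q_s(\ell_M(r))\). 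Any returned path lies in \(\gE(w_m,w_{m+1};\ell_M(r))\) by the LP contract.

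\emph{Step 3 (homotopy and limit).} On the good event the chain \(q_0,w_1,\dots,w_M,q_g\) is realized. Its concatenation satisfies \(P\simeq f\) by \cref{lem:tube}, since \(r,M\) satisfy that lemma's condition. The union bound then yields the displayed inequality. For fixed \(M\), we have \(p_m>0\), so \((1-p_m)^N\to0\) as \(N\to\infty\), and \(q_s(\ell_M(r))\to1\) as \(s\to\infty\) by the assumption. Both error terms therefore vanish.

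\textbf{Main obstacle.} The delicate point is the conditioning in Step 2. The edge being tested depends on the random samples, and the heuristic half may depend on history. I would handle this by conditioning on the full sample configuration and applying the assumption's conditional guarantee pointwise. A single failing edge among the selected ones suffices for failure, so no independence between edges is needed, only the union bound. A secondary check is that \(B_r(f(t_m))\subset C_m\) together with \(\delta\)-clearance makes \(A_m\) a full ball inside \(C_{\mathrm{free}}\), which is what gives \(p_m\ge\tfrac12 a_m\lambda(B_r)\).
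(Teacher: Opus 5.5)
Your proposal is correct and follows essentially the same route as the paper: a union bound over the \(M\) layer-miss events and the \(M+1\) local planner calls. In both, the in--along--out path certifies \((\delta-r)\)-clearance and length at most \(\ell_M(r)-\varsigma\), and \cref{lem:tube} then places the realized chain in \([f]\). Your measurable waypoint selection and explicit conditioning on the sampling history spell out what the paper leaves implicit in the conditional guarantee of \cref{ass:lp-profile}.
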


\begin{proof}
  Layer \(m\) fails when none of its \(N\) samples lands in \(A_m\), which has probability at most \((1-p_m)^N\); a union bound over the \(M\) layers gives the first error term.
  Suppose instead every layer hits, and pick a waypoint \(w_m\in A_m\) in each.
  Consecutive waypoints are joined by \([w_m,f(t_m)]\), the reference subarc, and \([f(t_{m+1}),w_{m+1}]\): this path is collision-free, lies in the tube since \(r<\delta\), and has length at most \(L/(M+1)+2r=\ell_M(r)-\varsigma\), so it is \((\delta-r)\)-clear.
  Each of the \(M+1\) local planner calls therefore succeeds with probability at least \(q_s(\ell_M(r))\) by \cref{ass:lp-profile}, and a union bound over the edges gives the second term.
  When they all succeed, \cref{lem:tube} places the realized chain in the tube, hence in \([f]\).
\end{proof}

The same waypoint construction covers every \(\delta\)-clear class at once, since for a fixed clearance there are finitely many endpoint-fixed classes of bounded length.
The reach \(\Lambda_s(\tau)\) grows only sublinearly in the local budget, so past a point extra local effort adds little (\cref{fig:local_vs_global_tradeoff}).
A planner able to reach across the whole query \(\Lambda_s(\tau)>L+2r+\varsigma\) collapses the layered graph to a single call.

\subsection{Anytime Policies}
\label{sec:anytime-theory}

Coverage puts a chain of the target class in the graph; the search of \cref{sec:dp-search} then returns a minimum-cost chain in every realized class at once.

\begin{proposition}[Class-augmented value iteration is exact]\label{prop:class-dp}
  Assume the class label (i) \emph{composes}, \(\kappa(\Gamma*\Gamma')=\kappa(\Gamma)\kappa(\Gamma')\), as the h-signature does~\cite{bhattacharya2012topological}, and (ii) \emph{identifies classes}: two start-to-goal paths share a label if and only if they are homotopic.
  Let \(J_m(v,\chi)\) be the least cost of a chain from \(v\in V_m\) to \(q_g\) with label \(\chi\), and \(\gK\) the finite set of labels such chains carry.
  Then value iteration over the state \((v,\chi)\), \[
    J_m(v,\chi)=\min_{\{v',\,\chi'\,\mid\,\kappa(\Gamma_{v,v'})\chi'=\chi\}} c(v,v')+J_{m+1}(v',\chi'),
  \] from \(J_{M+1}(q_g,e)=0\) at the empty label \(e\) and \(+\infty\) at every other terminal label, computes \(J\) exactly, returning a minimum-cost chain in every class in \(\Theta(MN^2|\gK|)\) time.
\end{proposition}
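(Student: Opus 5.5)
The argument is backward induction on the layer index \(m\) from \(M+1\) down to \(0\). The invariant is that the computed table equals the true quantity \(J_m(v,\chi)\), the least cost of a chain from \(v\) to \(q_g\) whose concatenated local path carries label \(\chi\). Here a chain from \(v\in V_m\) means a sequence \(v=v_m,v_{m+1},\dots,v_{M+1}=q_g\) of realized edges, and its label is \(\kappa\) of the concatenation of the \(\Gamma\)'s.

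\emph{Base case.} The only chain from \(q_g\) is the trivial one, with the empty (constant) path. Its label is the identity \(e\), since by (i) \(\kappa(\Gamma)=\kappa(\Gamma*\mathrm{const})=\kappa(\Gamma)\kappa(\mathrm{const})\), and the empty label is its unit. So \(J_{M+1}(q_g,e)=0\) and \(J_{M+1}(q_g,\chi)=+\infty\) for \(\chi\ne e\), which matches the initialization.

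\emph{Inductive step.} Every chain from \(v\in V_m\) decomposes uniquely as a first realized edge \((v,v')\) with \(v'\in V_{m+1}\), followed by a chain from \(v'\). The label then factors as \(\kappa(\Gamma_{v,v'})\chi'\) by (i), and the cost splits additively as \(c(v,v')\) plus the tail cost. Minimizing first over tails with fixed \((v',\chi')\) gives \(J_{m+1}(v',\chi')\) by the induction hypothesis, which is the standard principle of optimality. Minimizing next over admissible \((v',\chi')\) yields exactly the stated recursion. Absent edges carry infinite cost, so they never contribute, and \(+\infty\) entries propagate correctly. The argmin back-pointers, followed from \((q_0,\chi)\), reconstruct a chain attaining \(J_0(q_0,\chi)\).

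\emph{One optimal chain per class.} By (ii), labels of start-to-goal paths are in bijection with homotopy classes. Hence \(\{J_0(q_0,\chi)\}_{\chi\in\gK}\) contains one minimum-cost chain for each realized class. Only at this step is (ii) needed, because (i) alone gives exactness per label rather than per class.

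\emph{Complexity.} There are \(M+1\) inter-layer stages with at most \(N^2\) edge pairs each. For each pair and each target \(\chi\), the matching \(\chi'\) is determined by \(\chi'=\kappa(\Gamma_{v,v'})^{-1}\chi\); equivalently, one can iterate over \(\chi'\in\gK\) and push forward. Either way each stage costs \(\Theta(N^2|\gK|)\) work, for \(\Theta(MN^2|\gK|)\) in total. The lower bound holds because every table entry is touched.

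The main obstacle is the complexity claim: it requires \(\kappa(\Gamma_{v,v'})\chi'\) to be computable in \(O(1)\) and to land in \(\gK\). I would make both explicit as assumptions. First, labels are treated as elements of a group, so multiplication and inversion take constant time; the h-signature is additive in \(\mathbb{C}^k\), for instance. Second, \(\gK\) is taken to be closed under the suffixes that appear, which can be arranged by letting \(\gK\) be the set of labels of all suffix chains.
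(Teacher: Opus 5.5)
Your proof is correct and follows the paper's argument. Like the paper, you run backward induction over the augmented state \((v,\chi)\): additive costs and the composition property (i) make the tail problem depend only on \((v,\chi)\), (ii) then turns per-label optima into per-class optima, and the cost is \(N^2\) edge relaxations against \(|\gK|\) labels per layer. One small correction: the lower-bound half of \(\Theta(MN^2|\gK|)\) holds because the dense sweep relaxes all \((M{-}1)N^2+2N\) candidate edges against every label, not because "every table entry is touched" — the value table has only \(\Theta(MN|\gK|)\) entries.
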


\begin{proof}
  Costs add along a chain and, by (i), its label is the product of its edge labels, so any completion from layer \(m\) depends on the past only through \(v\) and the label \(\chi\) so far.
  Backward induction over this state is therefore exact, and by (ii) grouping chains by label groups them by class.
  Each layer relaxes its \(N^2\) edges against the \(|\gK|\) labels.
\end{proof}

\begin{theorem}[Anytime-GTMP class coverage]\label{thm:anytime-rr}
  Fix \((M,N,s)\) with per-stage coverage \(p_{\mathrm{cov}}=\prod_{m}(1-(1-p_m)^N)\,q_s(\ell_M(r))^{M+1}>0\).
  Then \([f]\) enters the archive almost surely, after at most \(1/p_{\mathrm{cov}}\) stages in expectation, and simultaneously for every \(\delta\)-clear class.
\end{theorem}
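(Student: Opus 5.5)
The plan is to reduce everything to a geometric-trials argument over independent stages. Anytime-GTMP draws a fresh layered sample set at every stage, and its \(\pi_m\) half is independent of the history. \Cref{ass:lp-profile} gives its success bound conditionally on the sampling history and on the other calls. So at each stage \(\nu\), conditionally on everything that happened before, the event \(E_\nu\) that every layer hits its waypoint ball \(A_m\) and every one of the \(M+1\) waypoint-to-waypoint calls succeeds has probability at least \(p_{\mathrm{cov}}\).

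To obtain this product bound instead of the union bound of \cref{thm:single-class}, I chain conditional probabilities. The \(M\) layer-hit events depend on disjoint, independent \(\pi_m\) draws, so
\[
\Pr[\text{all layers hit}]\ge\prod_m\bigl(1-(1-p_m)^N\bigr).
\]
Given the hits, fix the waypoints \(w_m\in A_m\) (say the first sample landing there). The connecting path built in the proof of \cref{thm:single-class} is \((\delta-r)\)-clear and has length at most \(\ell_M(r)-\varsigma\). I then condition sequentially on the edge calls \(\Gamma_0,\dots,\Gamma_M\). Because the assumption holds conditionally on the other calls, each factor is at least \(q_s(\ell_M(r))\), which gives \(q_s(\ell_M(r))^{M+1}\).

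On \(E_\nu\), \cref{lem:tube} places the realized chain in \([f]\), so the stage graph covers \([f]\). \Cref{prop:class-dp} then guarantees that the class-augmented sweep returns a chain with label \(\kappa(f)\), and the archive update inserts it if that class is empty or improves it otherwise. Hence \([f]\) is in \(\mathcal{A}_\nu\) from then on.

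The probability that \([f]\) is still missing after \(k\) stages is at most \(\Pr[E_1^c\cap\dots\cap E_k^c]\). Iterating the conditional bound gives \(\Pr[E_1^c\cap\dots\cap E_k^c]\le(1-p_{\mathrm{cov}})^k\to0\), so \([f]\) enters the archive almost surely. The entry time \(T\) is stochastically dominated by a geometric variable with parameter \(p_{\mathrm{cov}}\), so
\[
\mathbb{E}[T]\le\sum_{k\ge0}(1-p_{\mathrm{cov}})^k=1/p_{\mathrm{cov}}.
\]
The \(p_{\mathrm{cov}}\) in the statement is tied to the particular \(f\) through \(L\), \(r\) and the \(p_m\). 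So for the clause ``simultaneously for every \(\delta\)-clear class'' I read the statement as applying to each \(\delta\)-clear class whose representative has bounded length and for which \(p_{\mathrm{cov}}>0\). For each such class, the argument above applies with its own representative and gives an almost-sure event. Using the remark after \cref{thm:single-class} that, for fixed clearance and length bound, there are finitely many such classes \(\{[f^{(1)}],\dots,[f^{(K)}]\}\), the intersection of \(K\) almost-sure events is almost sure. Even without finiteness, countably many classes suffice, since \(\pi_1\) of an open subset of \(\mathbb{R}^d\) is countable.

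The main obstacle is the product form of \(p_{\mathrm{cov}}\), specifically the correlation between the edge calls and the choice of waypoints. The waypoints are random and are selected after sampling. The heuristic half of the mixture may depend on the history, and the local planner calls share the samples. I will handle this by defining the waypoints as a measurable function of the \(\pi_m\) draws only, and by invoking \cref{ass:lp-profile} in its conditional form at each of the \(M+1\) steps. I also need to check that the heuristic samples cannot break the argument. They only add vertices and edges, and the value iteration of \cref{prop:class-dp} still returns a minimum-cost chain in label \(\kappa(f)\), so extra vertices are harmless.
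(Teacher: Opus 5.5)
Your proposal is correct and follows the paper's argument: a per-stage lower bound \(p_{\mathrm{cov}}\), the geometric bound \((1-p_{\mathrm{cov}})^k\) on missing \([f]\) through \(k\) stages (giving almost-sure entry and expected entry time at most \(1/p_{\mathrm{cov}}\)), archive monotonicity via \cref{prop:class-dp}, and an intersection over the finitely many classes. The paper instead cites \cref{thm:single-class} for \(\Pr(A_\nu)\ge p_{\mathrm{cov}}\), although that theorem states only a union bound, and it asserts stage independence before invoking Borel--Cantelli; your chaining of the conditional guarantee of \cref{ass:lp-profile} and conditioning on the past is a more careful version of the same proof, which also tolerates the history-dependent heuristic half of the sampler.
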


\begin{proof}
  Each stage draws fresh samples and independent local planner randomness, so the events \(A_\nu=\{\text{stage }\nu\text{ covers }[f]\}\) are independent, and \cref{thm:single-class} gives \(\Pr(A_\nu)\ge p_{\mathrm{cov}}\).
  As \(\sum_\nu\Pr(A_\nu)=\infty\), the Borel--Cantelli lemma makes \(A_\nu\) occur infinitely often almost surely, and \(\Pr(\bigcap_{\nu\le K}\neg A_\nu)\le(1-p_{\mathrm{cov}})^K\) bounds the first covering stage in expectation by \(1/p_{\mathrm{cov}}\).
  By \cref{prop:class-dp} each graph yields an archive entry, and the archive is monotone, so \([f]\) stays covered; with finitely many \(\delta\)-clear classes, intersecting these events covers them all at once.
\end{proof}

AO-GTMP instead fixes \(s\) and grows \((M_\nu,N_\nu)\uparrow\infty\), sampling each layer from the informed set \(\gX_f(c_{\nu-1})\)~\cite{gammell2018informed}, an instance of AO-x on the layered DAG~\cite{hauser2016asymptotically}; we state it for path length, but weighted lengths obey the same guarantee.

\begin{theorem}[AO-GTMP cost convergence]\label{thm:aogtmp}
  Assume a \(\delta\)-clear optimal path of length \(c^*\), a short-range-exact local planner, sampling regions containing \(\gX_f(c_0)\), and a sample schedule under which the number \(\Delta N_\nu\) of fresh samples at stage \(\nu\) satisfies \(\Delta N_\nu/(M_\nu^{d}\log M_\nu)\to\infty\).
  Then the best path length \(c_\nu\to c^*\) almost surely.
\end{theorem}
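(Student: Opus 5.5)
Since \(c_\nu\ge c^*\) always (every returned path is feasible and cost is at least length), and \(c_\nu\) is non-increasing, it suffices to show that for every \(\epsilon>0\), almost surely \(c_\nu\le c^*+\epsilon\) for some \(\nu\). Monotonicity then makes the limit exist and equal \(c^*\). I will fix \(\epsilon\) and let \(f^*\) be the \(\delta\)-clear optimal path, parameterized at constant speed, with \(L=c^*\). I will also take \(c_0\ge c^*\); otherwise the claim is trivial once any feasible path is found.

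\textbf{Step 1 (a tube construction that is near-optimal and straight-line).} For stage \(\nu\) with \(M=M_\nu\), I choose a radius \(r_\nu\) shrinking with \(M\), say \(r_\nu=\eta c^*/(M_\nu+1)\) for a small constant \(\eta\). The waypoint balls are then \(A_{\nu,m}=B_{r_\nu}(f^*(t_m))\). Any \(w_m\in A_{\nu,m}\) satisfies \(\|w_m-w_{m+1}\|\le c^*/(M+1)+2r_\nu\), and the segment \([w_m,w_{m+1}]\) is collision-free once \(c^*/(M+1)+r_\nu<\delta\), as in the straight-line case of \cref{lem:tube}. Short-range exactness then makes every such edge succeed deterministically, so \(q_s\) plays no role. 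The straight-line chain has length at most \(c^*+2(M+1)r_\nu=c^*(1+2\eta)\), so \(\eta<\epsilon/(2c^*)\) gives cost at most \(c^*+\epsilon\) via cost Lipschitzness. The edge lengths also stay under \(\ell_{M}(r_\nu)\) because \(\varsigma>0\), so the query-ellipsoid limit does not reject these edges. Each waypoint lies in \(\gX_f(c^*+\epsilon')\subseteq\gX_f(c_{\nu-1})\), because \(\|w-q_0\|+\|w-q_g\|\le c^*+2r_\nu\). So the informed sampling still has density on the balls, provided I intersect with a slightly shrunken ball when \(c_{\nu-1}\) is near \(c^*\). Dealing with this intersection is a technical point: the part of \(B_{r_\nu}\) inside the informed set still has volume \(\gtrsim r_\nu^d\) as long as \(c_{\nu-1}>c^*+\epsilon\). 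Otherwise we are already done.

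\textbf{Step 2 (hit probability via the \(\pi\) half).} The uniform measure on \(\gX_f(c_{\nu-1})\cap C_{\nu,m}\cap C_{\mathrm{free}}\subseteq\gX_f(c_0)\) has density at least \(1/\lambda(\gX_f(c_0))\). This gives \(p_{\nu,m}\ge \tfrac12\kappa_d\, r_\nu^d/\lambda(\gX_f(c_0))=\Omega(M_\nu^{-d})\). The fresh samples at stage \(\nu\) are drawn independently of the past through the \(\pi\) half of the coin, conditionally on the history. By the union bound of \cref{thm:single-class}, with no edge term, the conditional probability that stage \(\nu\) fails to produce the good chain is at most \(M_\nu\exp(-c\,\Delta N_\nu M_\nu^{-d})\). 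Under \(\Delta N_\nu/(M_\nu^d\log M_\nu)\to\infty\) this is \(\exp(\log M_\nu-c\,\omega_\nu\log M_\nu)\to0\), where \(\omega_\nu\to\infty\). Only fresh samples count, and old samples may have been drawn under a looser bound; this is harmless, since extra vertices only help the min.

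\textbf{Step 3 (conclusion and main obstacle).} If stage \(\nu\) contains the good chain, the exact DAG search of \cref{sec:dp-search} returns cost at most \(c^*+\epsilon\), and \texttt{simp} does not increase cost. The failure events are conditionally bounded by \(\beta_\nu\to0\). Hence \(\Pr[\text{all stages }\nu\ge\nu_0\text{ fail}]\le\prod_{\nu\ge\nu_0}\beta_\nu=0\), and this product is zero even without summability. A union over rational \(\epsilon\) then finishes the proof. I expect the main obstacle to be Step 1's compatibility issues: the epoch structure changes the grid \(t_m\) and hence the balls, and the informed-set truncation interacts with the waypoint balls near the optimum. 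I would handle both by working one stage at a time, since the bound depends only on the stage's own \(M_\nu\) and fresh samples. For the truncation, I would use that \(B_{r_\nu}(f^*(t_m))\cap\gX_f(c^*+\epsilon)\) contains a ball of radius \(r_\nu/2\) once \(r_\nu\ll\epsilon\).
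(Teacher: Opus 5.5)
Your proposal is correct and follows the paper's proof essentially step for step. Both arguments use shrinking waypoint balls of radius \(O(\varepsilon/(M_\nu+1))\) around a clear (near-)optimal path, deterministic straight-line edges via short-range exactness, informed-set containment while the bound exceeds \(c^*+O(\varepsilon)\), a per-layer hit probability \(\Omega(M_\nu^{-d})\) from the uniform half of the sampler, and a conditional Borel--Cantelli step (your product of conditional failure bounds), closed by a union over rational \(\varepsilon\).

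The differences are cosmetic. You use the \(\delta\)-clear optimal path itself instead of the paper's \(\delta'\)-clear approximant \(\gamma_\varepsilon\). Your truncation worry is moot, because your own estimate \(\|w-q_0\|+\|w-q_g\|\le c^*+2r_\nu<c^*+\epsilon<c_{\nu-1}\) already puts the whole ball inside the informed set.
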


\begin{proof}
  Fix \(\varepsilon,\delta'>0\) and let \(\gamma_\varepsilon\) be a \(\delta'\)-clear path of length \(L_\varepsilon\le c^*+\varepsilon\)~\cite{kleinbort2020refined}, at constant speed.
  With \(M=M_\nu\) large enough that \(L_\varepsilon/(M{+}1)\le\delta'/2\), take waypoint balls: \[
    B_m=B_{r(M)}(\gamma_\varepsilon(t_m)),\quad r(M)=\min\left\{\frac{\delta'}{4},\ \frac{\varepsilon}{2(M{+}1)}\right\}.
  \]
  While \(c_{\nu-1}>c^*+2\varepsilon\), every \(y\in B_m\) has: \[
    \|y-q_0\|+\|y-q_g\|\le L_\varepsilon+2r(M)\le c^*+2\varepsilon<c_{\nu-1},
  \] and \(r(M)<\delta'\) keeps \(B_m\subset C_{\mathrm{free}}\), so \(B_m\subset\gX_f(c_{\nu-1})\cap C_{\mathrm{free}}\) is samplable in every layer.
  Each segment \([w_m,w_{m+1}]\) lies within \(r(M)+L_\varepsilon/\big(2(M{+}1)\big)<\delta'\) of \(\gamma_\varepsilon\), so short-range exactness realizes it deterministically---only ball-hitting is random---and the chain length is: \[
    \sum_m\|w_m-w_{m+1}\|\le L_\varepsilon+2(M{+}1)r(M)\le c^*+2\varepsilon.
  \]
  Each ball is hit by a single sample with probability: \[
    p_M\ge\frac{\lambda(B_{r(M)})}{2\lambda(\gX_f(c_0))}\ge c_\varepsilon M^{-d},
  \] so the chance some ball is unhit is at most \((M{+}1)(1-p_M)^{\Delta N_\nu}\to0\) under the schedule.
  Hence, with \(T=\bigcap_\nu\{c_\nu>c^*+2\varepsilon\}\), the conditional Borel--Cantelli lemma forces \(\Pr(T)=0\); intersecting over \(\varepsilon=1/k\) with \(c_\nu\ge c^*\) gives \(c_\nu\to c^*\) almost surely.
\end{proof}


\section{Experiments} \label{sec:experiments}
\begin{figure}[t]
  \centering
  \includegraphics[width=\linewidth]{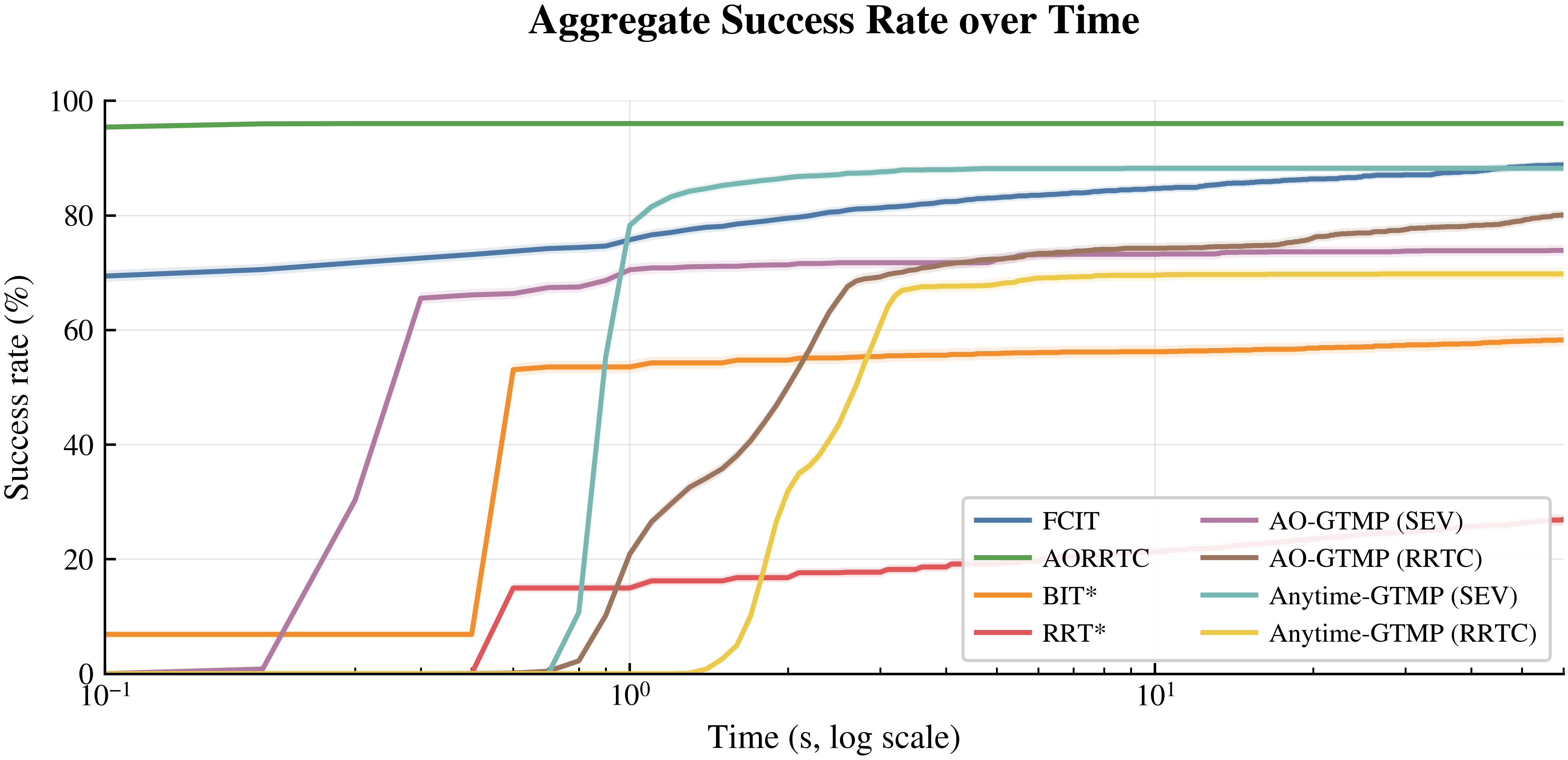}
  \caption{Success rate eCDFs on a 60 second budget, log scale. Anytime-GTMP SEV iterates fast and a low local planner budget leaves room for global exploration.}
  \label{fig:aggregate_success_rate_cdf}
\end{figure}

Experiments ran on an AMD Ryzen Threadripper PRO 5965WX 24-core CPU with 32\,GB of RAM and an NVIDIA RTX 4090.
We run each policy with two local planners: straight-line edges with collision checking (SEV) and RRT-Connect (RRTC).

\begin{table*}[t]
  \centering
  \small
  \setlength{\tabcolsep}{4pt}
  \resizebox{\textwidth}{!}{%
    \begin{tabular}{llccccccc}
      \toprule
      & Planner & \textbf{bookshelf small} & \textbf{bookshelf tall} & \textbf{bookshelf thin} & \textbf{table pick} & \textbf{table under pick} & \textbf{box} & \textbf{cage} \\
      \midrule
      \multirow{8}{*}{\rotatebox{90}{Panda}} & FCIT* & \textbf{100\%} (4.6) & \textbf{100\%} (4.6) & \textbf{100\%} (4.5) & 99\% (4.6) & \textbf{100\%} (5.7) & \textbf{100\%} (4.2) & 99\% (7.4) \\
      & AORRTC & \textbf{100\%} (\textbf{4.2}) & \textbf{100\%} (\textbf{4.4}) & \textbf{100\%} (\textbf{4.2}) & \textbf{100\%} (\textbf{4.4}) & \textbf{100\%} (4.7) & \textbf{100\%} (\textbf{3.9}) & \textbf{100\%} (\textbf{4.9}) \\
      & BIT* & 0\% (\(-\)) & 52\% (\(-\)) & 98\% (\(-\)) & 98\% (\(-\)) & \textbf{100\%} (5.6) & 1\% (\(-\)) & 20\% (\(-\)) \\
      & RRT* & 32\% (\(-\)) & 29\% (\(-\)) & 34\% (\(-\)) & 48\% (\(-\)) & 37\% (\(-\)) & 24\% (\(-\)) & 0\% (\(-\)) \\
      & AO-GTMP (SEV) & \textbf{100\%} (4.4) & \textbf{100\%} (4.5) & \textbf{100\%} (4.4) & 99\% (4.5) & \textbf{100\%} (5.8) & \textbf{100\%} (4.2) & \textbf{100\%} (6.8) \\
      & AO-GTMP (RRTC) & 92\% (4.3) & 97\% (\textbf{4.4}) & \textbf{100\%} (\textbf{4.2}) & \textbf{100\%} (\textbf{4.4}) & \textbf{100\%} (\textbf{3.7}) & \textbf{100\%} (\textbf{3.9}) & 96\% (6.3) \\
      & Anytime-GTMP (SEV) & \textbf{100\%} (4.7) & \textbf{100\%} (4.8) & \textbf{100\%} (4.8) & 99\% (4.8) & \textbf{100\%} (6.6) & \textbf{100\%} (4.8) & \textbf{100\%} (7.0) \\
      & Anytime-GTMP (RRTC) & 91\% (4.3) & 98\% (4.5) & 98\% (4.4) & \textbf{100\%} (\textbf{4.4}) & \textbf{100\%} (\textbf{3.7}) & \textbf{100\%} (\textbf{3.9}) & 88\% (6.8) \\
      \midrule
      \multirow{8}{*}{\rotatebox{90}{Ur5}} & FCIT* & 94\% (6.3) & 95\% (6.5) & \textbf{99\%} (6.2) & \textbf{100\%} (5.9) & \textbf{100\%} (6.2) & \textbf{100\%} (4.6) & \textbf{100\%} (7.0) \\
      & AORRTC & \textbf{96\%} (6.0) & \textbf{98\%} (\textbf{6.0}) & \textbf{99\%} (5.7) & \textbf{100\%} (5.6) & \textbf{100\%} (5.5) & \textbf{100\%} (\textbf{4.2}) & \textbf{100\%} (\textbf{5.6}) \\
      & BIT* & \textbf{96\%} (6.4) & 95\% (6.6) & \textbf{99\%} (6.3) & \textbf{100\%} (5.9) & 99\% (\(-\)) & \textbf{100\%} (4.7) & 48\% (\(-\)) \\
      & RRT* & 61\% (\(-\)) & 44\% (\(-\)) & 46\% (\(-\)) & 45\% (\(-\)) & 63\% (\(-\)) & 67\% (\(-\)) & 0\% (\(-\)) \\
      & AO-GTMP (SEV) & \textbf{96\%} (6.1) & 95\% (6.2) & \textbf{99\%} (5.9) & \textbf{100\%} (5.6) & \textbf{100\%} (6.4) & \textbf{100\%} (4.4) & \textbf{100\%} (7.0) \\
      & AO-GTMP (RRTC) & \textbf{96\%} (\textbf{5.9}) & 96\% (\textbf{6.0}) & \textbf{99\%} (\textbf{5.6}) & \textbf{100\%} (\textbf{5.5}) & 99\% (\textbf{4.8}) & \textbf{100\%} (\textbf{4.2}) & 83\% (\textbf{5.8}) \\
      & Anytime-GTMP (SEV) & \textbf{96\%} (6.3) & 95\% (6.6) & \textbf{99\%} (6.3) & \textbf{100\%} (6.0) & \textbf{100\%} (7.3) & \textbf{100\%} (5.0) & \textbf{100\%} (7.2) \\
      & Anytime-GTMP (RRTC) & 94\% (6.0) & 95\% (6.3) & 98\% (5.8) & \textbf{100\%} (\textbf{5.6}) & \textbf{100\%} (\textbf{4.8}) & \textbf{100\%} (\textbf{4.2}) & 86\% (7.7) \\
      \midrule
      \multirow{8}{*}{\rotatebox{90}{Fetch}} & FCIT* & 43\% (9.3) & 47\% (8.8) & 25\% (9.5) & 96\% (6.6) & \textbf{100\%} (5.6) & 98\% (7.7) & 88\% (11.0) \\
      & AORRTC & \textbf{97\%} (7.2) & \textbf{97\%} (\textbf{6.8}) & \textbf{94\%} (\textbf{5.7}) & \textbf{100\%} (5.4) & \textbf{100\%} (3.7) & \textbf{100\%} (6.4) & \textbf{100\%} (\textbf{6.2}) \\
      & BIT* & 5\% (\(-\)) & 6\% (\(-\)) & 3\% (\(-\)) & 71\% (\(-\)) & 70\% (\(-\)) & 61\% (\(-\)) & 14\% (\(-\)) \\
      & RRT* & 0\% (\(-\)) & 1\% (\(-\)) & 0\% (\(-\)) & 24\% (\(-\)) & 3\% (\(-\)) & 10\% (\(-\)) & 0\% (\(-\)) \\
      & AO-GTMP (SEV) & 53\% (11.3) & 40\% (13.0) & 66\% (16.9) & \textbf{100\%} (7.7) & \textbf{100\%} (6.1) & 99\% (10.4) & 64\% (18.7) \\
      & AO-GTMP (RRTC) & 14\% (\textbf{5.9}) & 19\% (\textbf{6.1}) & 2\% (9.4) & 91\% (\textbf{4.9}) & 95\% (\textbf{2.9}) & 99\% (\textbf{5.6}) & 27\% (12.9) \\
      & Anytime-GTMP (SEV) & 81\% (9.0) & 74\% (8.7) & 84\% (9.7) & \textbf{100\%} (7.7) & \textbf{100\%} (6.9) & 99\% (9.0) & 88\% (13.0) \\
      & Anytime-GTMP (RRTC) & 15\% (9.2) & 20\% (8.5) & 4\% (8.3) & 82\% (5.3) & 89\% (3.7) & 97\% (\textbf{5.6}) & 16\% (9.4) \\
      \bottomrule
    \end{tabular}%
  }
  \caption{\textbf{Planner performance on the MotionBenchMaker (MBM) suite.} Each planner is evaluated on 7 problems for the Panda, UR5, and Fetch robots. For every (planner, problem) pair, success rate and mean simplified path length (in paranthesis) is reported. As planners differ in which trials they solve, path cost is aggregated only over trials solved by \emph{every} reference planner (FCIT, AORRTC, and GTMP) in that column, so reported costs compare planners on a common set of problem instances. A dash (\(-\)) in the cost position indicates that the planner did not solve that whole common set, leaving no comparable cost. Each rate is over 100 trials per (planner, problem) pair. Within each robot block the highest success rate in a column is shown in \textbf{bold}. For path cost we \textbf{bold} the lowest mean cost together with every planner. On cost, AO-GTMP is bold on 5/7 Panda, 7/7 UR5, and 5/7 Fetch problems, and Anytime-GTMP on 4/7, 3/7, and 1/7, against 6/7, 3/7, and 3/7 for the strongest baseline, AORRTC.}
  \label{tab:mbm_sweep}
\end{table*}

\subsection{Feasibility and Cost}

We validate feasibility and cost on MotionBenchMaker~\cite{chamzas2022-motion-bench-maker} (\cref{tab:mbm_sweep,fig:aggregate_success_rate_cdf}).
GTMP variants are not the fastest to a first solution: within one second AORRTC and FCIT succeed often, BIT* stays below \(10\%\), and GTMP variants rarely return, which we attribute to the overhead of layered graph construction and local planner evaluation.
Over the full time budget, Anytime-GTMP (SEV) matches FCIT's final success rate of approximately \(85\%\).
On the common set solved by every reference planner, the variants match or beat the lowest mean path cost most often: AO-GTMP on \(5/7\) Panda, \(7/7\) UR5, and \(5/7\) Fetch problems, and Anytime-GTMP on \(4/7\), \(3/7\), and \(1/7\), against \(6/7\), \(3/7\), and \(3/7\) for AORRTC.

\Cref{sec:single-class} predicts that a strong enough local planner collapses the graph to a single call: fixing \(M=6\) layers, \(N=100\) samples per layer, and an AORRTC budget of 1000 iterations, a complete start-to-goal path emerges near 250 iterations and approaches the optimal chain by 1000, as the graph approaches the connectivity of the continuous configuration space.

\begin{figure*}[t]
  \centering
  \includegraphics[width=0.47\textwidth]{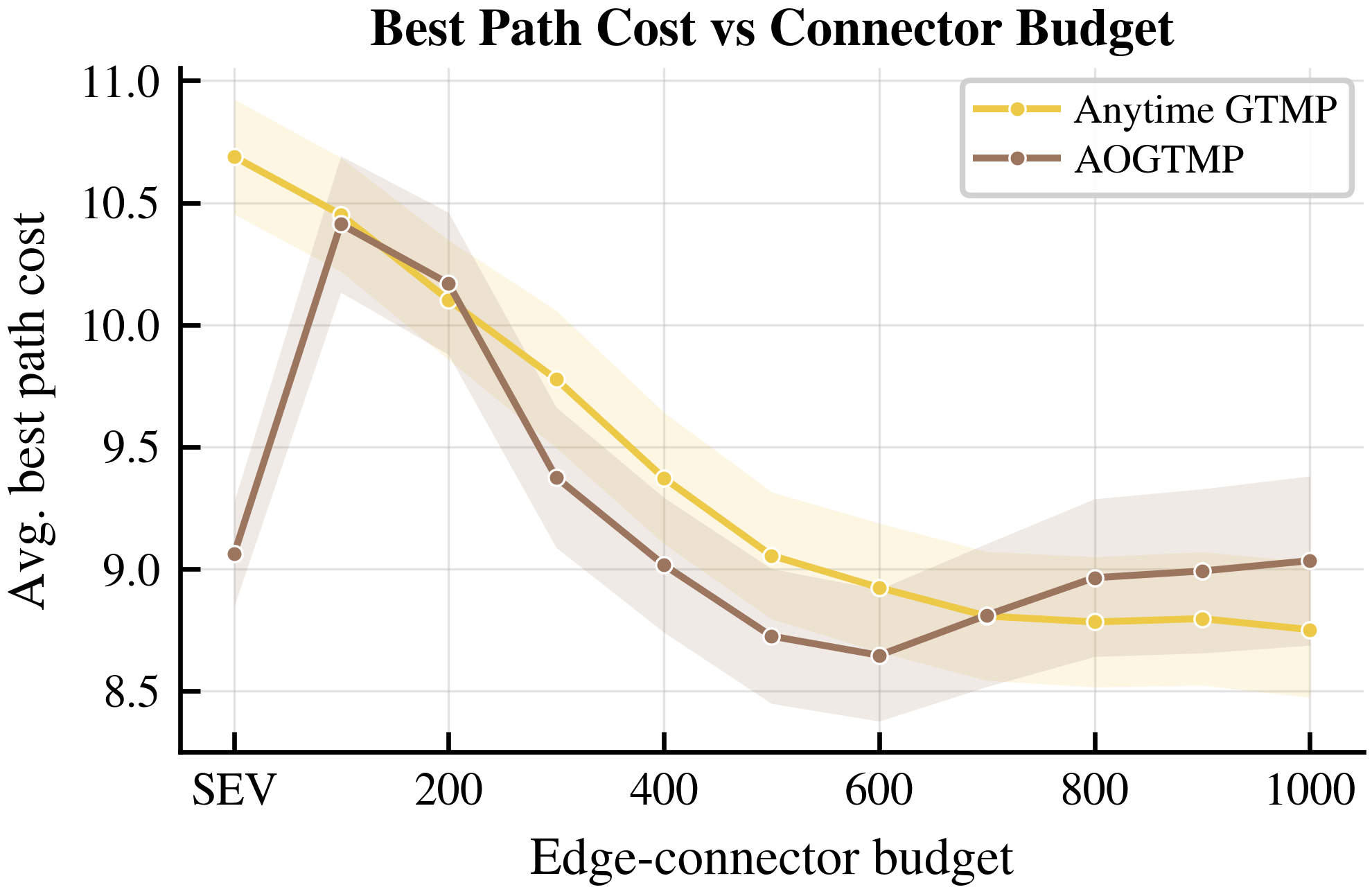}
  \hfill
  \includegraphics[width=0.47\textwidth]{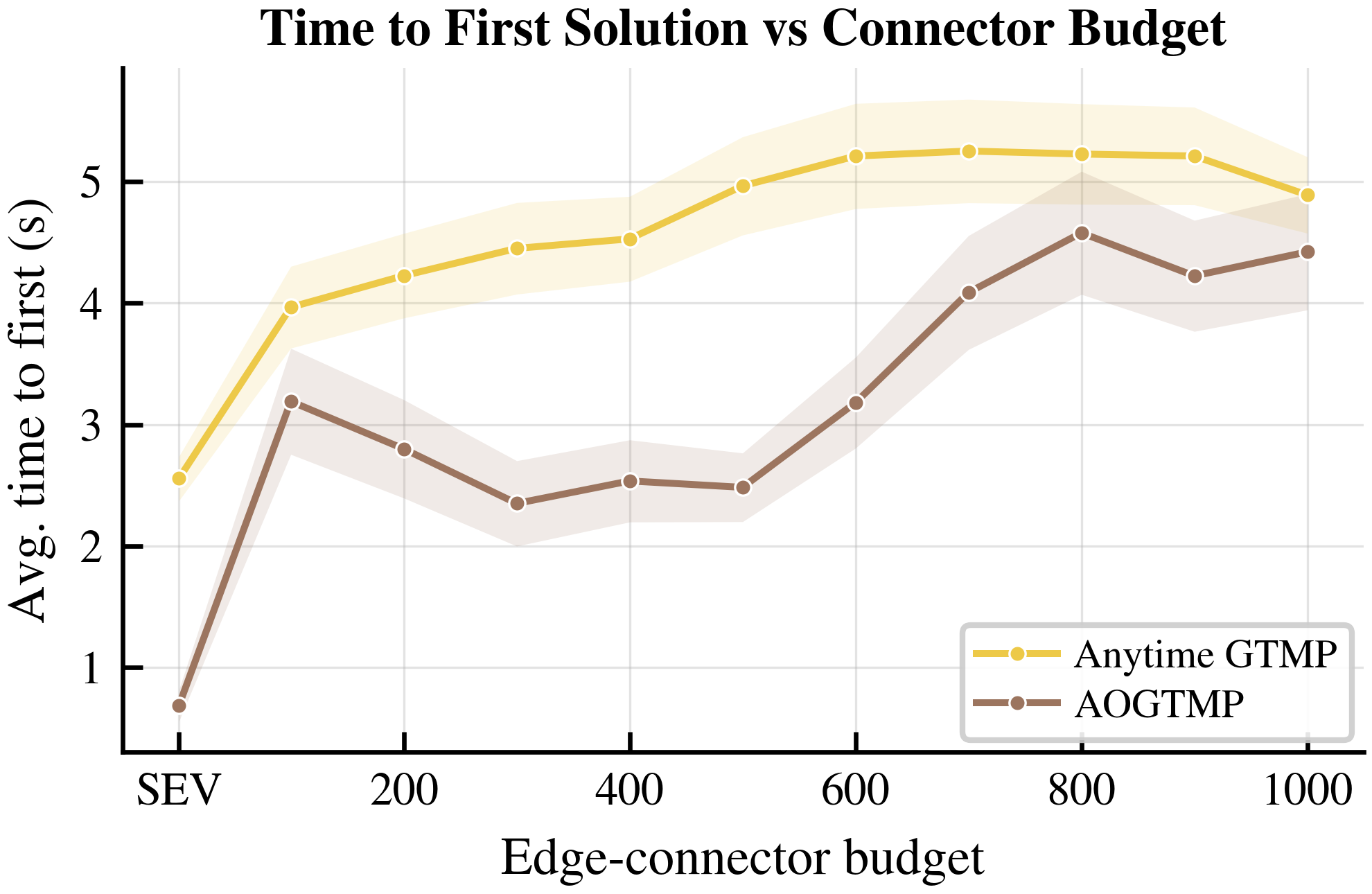}
  \caption{Per-edge local planner budget against (a) best path cost and (b) time to first solution, for Anytime-GTMP (yellow) and AO-GTMP (brown). Budget 1 is a straight-line local planner; 100--1000 are maximum RRT-Connect iterations per edge. Bands are means over solved problems with SEM.}
  \label{fig:local_vs_global_tradeoff}
\end{figure*}

Sweeping the per-edge budget in \cref{fig:local_vs_global_tradeoff} shows diminishing returns: path cost improves until roughly 400 to 600 RRT-Connect iterations and then flattens, so under a fixed planning budget moderate local effort with more global sampling dominates heavy local effort. 

\subsection{Topological Diversity}

\begin{figure*}[t]
  \centering
  \includegraphics[width=0.48\textwidth]{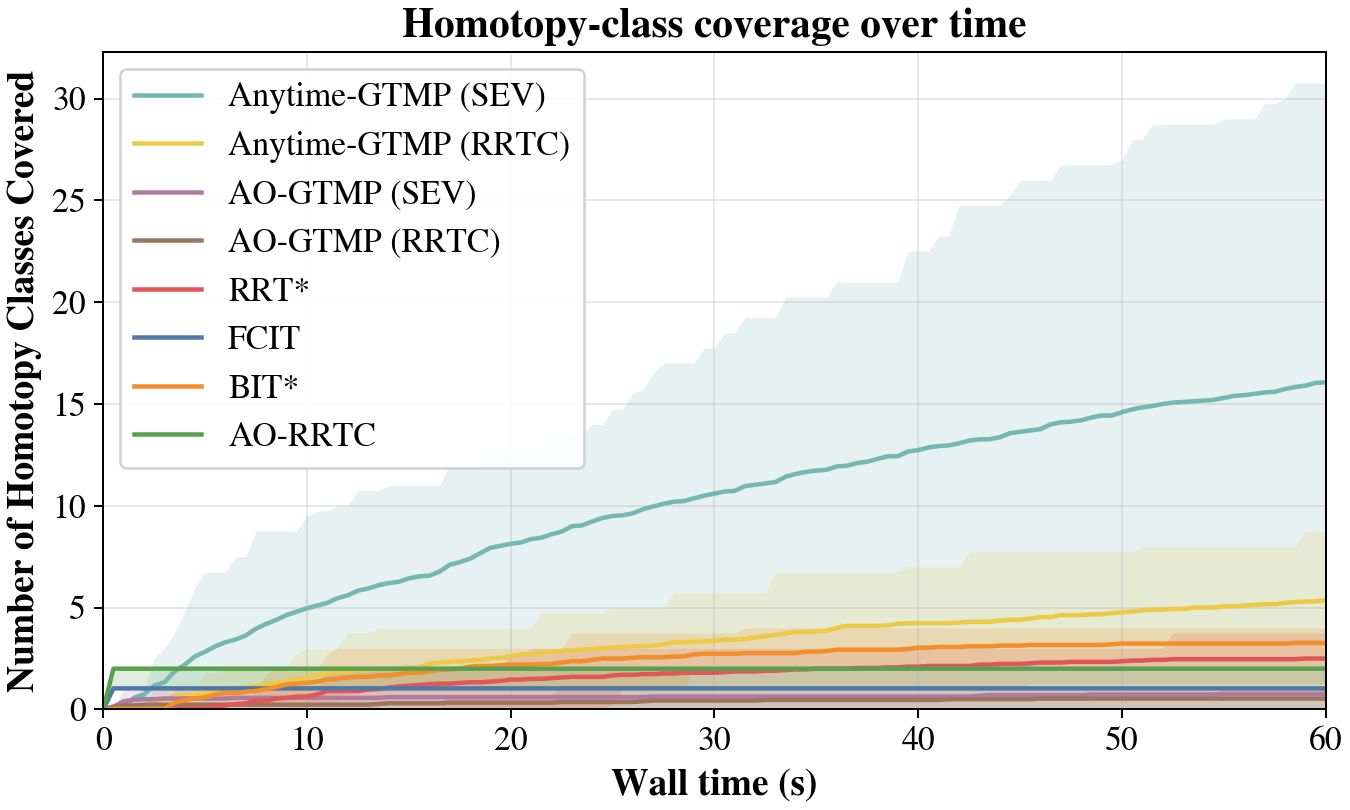}
  \hfill
  \includegraphics[width=0.48\textwidth]{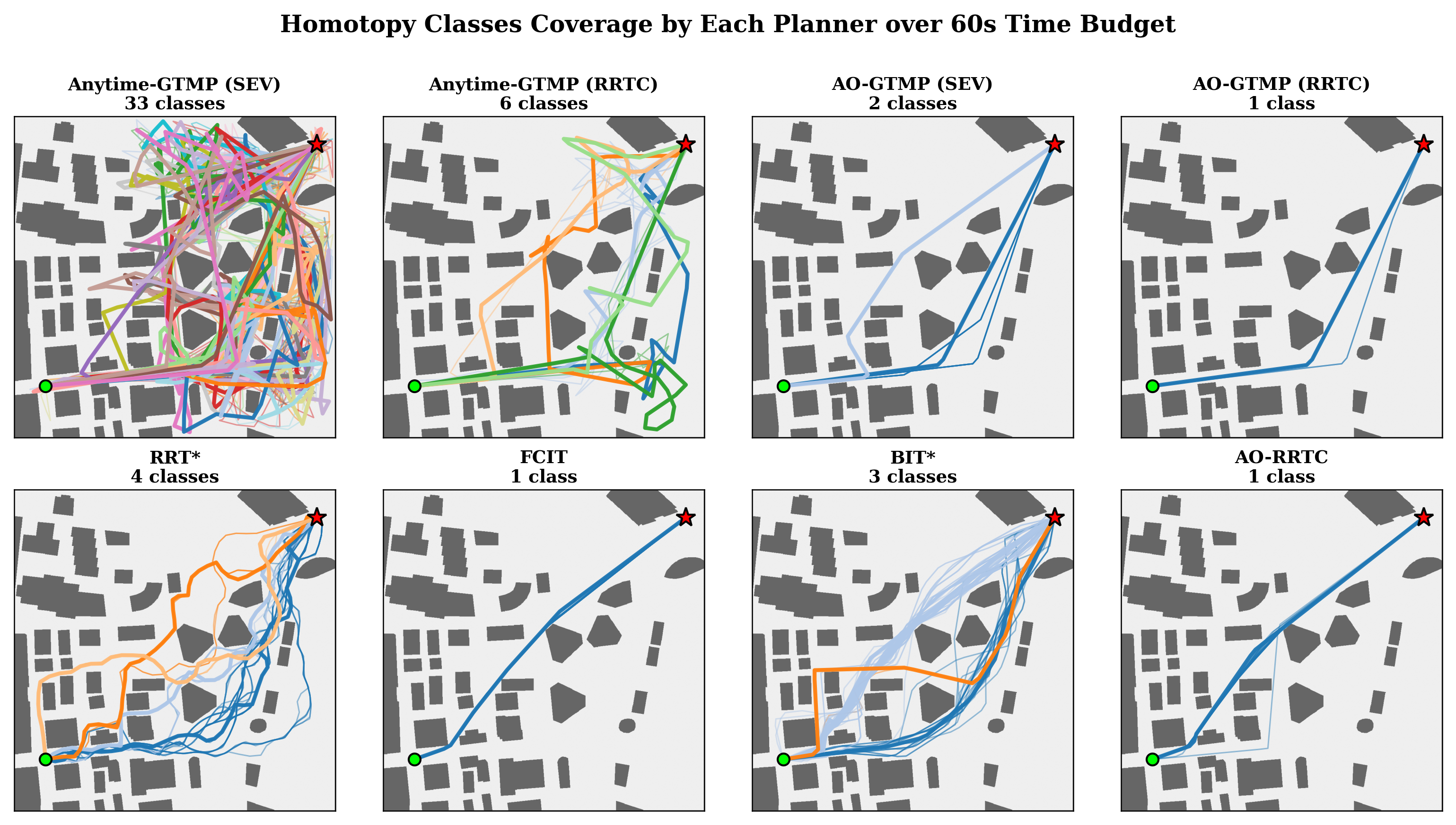}
  \caption{Homotopy-class coverage across the 2D maps: aggregate coverage (left) and per-planner classes on the Sydney map (right). Random restart keeps sampling new corridors, so Anytime-GTMP reaches the most distinct classes, whereas AO-GTMP concentrates samples in the shrinking informed set.}
  \label{fig:path_diversity}
\end{figure*}

We evaluate topological diversity (\cref{fig:diversity_vs_cost_bound,fig:path_diversity}) on 2D street-view heightmaps from Sturtevant's database~\cite{sturtevant2012benchmarks}, whose homotopy classes are visually verifiable; the baselines return one path per query and do not target class coverage.
Over a 60\,s budget we record a \textit{path event} whenever a planner explores a start-to-goal path, returned or not, label each event with a homotopy invariant, and pre-cluster with Dynamic Time Warping, itself not a homotopy invariant.
Anytime-GTMP covers the highest average number of classes, while aggressive cost bounding concentrates informed planners on a few near-optimal ones: on the Sydney map, AO-GTMP (RRTC) and FCIT each identify a single class.

\begin{figure}[t]
  \centering
  \includegraphics[width=\linewidth]{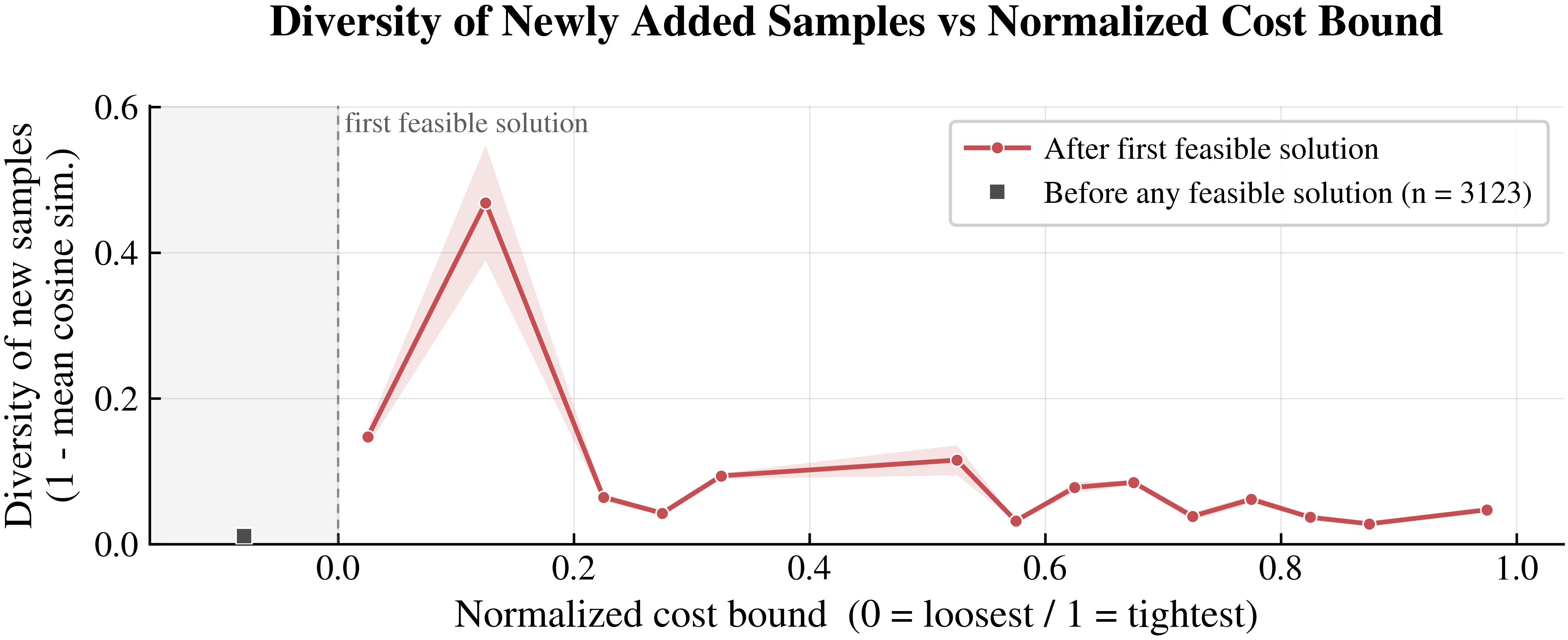}
  \caption{As AO-GTMP's cost bound shrinks, so does the sampling volume, and the diversity of new samples falls as search narrows to near optimal.}
  \label{fig:diversity_vs_cost_bound}
\end{figure}

\Cref{fig:diversity_vs_cost_bound} plots the diversity of new AO-GTMP samples, the complement of mean cosine similarity, against the normalized cost bound: it spikes to \(0.6\) just after the first feasible solution, when the informed set is still large, then falls as \(c_\nu\) tightens and \(\gX_f(c_\nu)\) contracts.
The trade-off is empirical as well as theoretical: repetition gives diversity, and a shrinking informed set gives optimality.
\section{Conclusion} \label{sec:conclusion}

We have generalized GTMP~\cite{le2025global} so that adjacent-layer edges are realized by any black-box local planner.
We prove that one sampled graph covers each \(\delta\)-clear homotopy class with high probability (\cref{thm:single-class}), and every such class of bounded length simultaneously; that the longest edge a local planner connects reliably grows only sublinearly in its budget, while each additional sample per layer cuts the miss probability exponentially; and that two iteration policies yield distinct guarantees: random restart at a fixed budget achieves almost-sure class coverage (\cref{thm:anytime-rr}), while informed expansion with growing budgets drives cost to the optimum (\cref{thm:aogtmp}).
Empirically, the variants match FCIT's success rate at competitive path cost on 6--8 DoF manipulation, and on 2D navigation Anytime-GTMP covers the most homotopy classes while AO-GTMP concentrates on near-optimal ones---the random-restart versus informed-expansion trade-off in practice.

Future work includes lazy variants that realize only a subset of adjacent-layer pairs, scheduling that adapts \((M_\nu,N_\nu,s_\nu)\) online from the measured local planner profile, dynamically-aware local planners that extend the same guarantees to dynamically feasible trajectories, and evaluating topological coverage on high-DoF manipulation, where homotopy classes are harder to verify than in the 2D navigation studied here.

\bibliographystyle{IEEEtran}
\bibliography{bib/IEEEabrv,bib/references_sub}

\ifextended
\appendix
\section{Proofs}
\label{sec:proofs}

This appendix collects the full statements and proofs deferred from \cref{sec:homotopy_theory}.
Throughout, \(f\), \(A_m\), \(p_m\), and \(\ell_M(r)\) are as fixed there, and \(p_{\min}=\min_m p_m\).

\subsection{Coverage from One Sampled Graph}

Explicitly, \begin{gather*}
  N\ge p_{\min}^{-1}\log(2M/\varepsilon), \\
  s\ge\ell_M(r)(\beta\eta)^{-1}\log(2(M{+}1)/\varepsilon)
\end{gather*} give coverage probability at least \(1-\varepsilon\).
The two budgets scale differently, which is the practical content of the bound.

\begin{proposition}[Local planner reach]\label{prop:lp-profile}
  For an RRT-Connect local planner confined to the query ellipsoid (\cref{ass:lp-profile}), the reach \[
    \Lambda_s(\tau)=\sup\{\ell\mid q_s(\ell)\ge1-\tau\}
  \] satisfies \(\Lambda_s(\tau)=\Omega(\alpha\,s^{1/(d+1)})\) with \(\alpha=\min(\rho,\eta)\) for the planner step \(\eta\).
  Any \(\lambda<\Lambda_s(\tau)\) then admits \[
    M\ge\left\lceil \frac{L}{\lambda-2r-\varsigma}\right\rceil-1
  \] layers with \(q_s(\ell_M(r))\ge1-\tau\), so \(M=O(L\alpha^{-1}s^{-1/(d+1)})\).
\end{proposition}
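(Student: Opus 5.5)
The first step is to turn the RRT-Connect failure bound quoted after \cref{ass:lp-profile} into an explicit success profile, and then invert it for the reach.
For a query of length limit \(\ell\) whose endpoints admit a \(\rho\)-clear connecting path, the failure probability is at most \(\exp(-\beta s\eta/\ell)\).
To expose the dimension dependence, I will redo the standard covering argument of~\cite{kleinbort2019probabilistic,karaman2011sampling} inside the query ellipsoid \(\gE(x,y;\ell)\).
The plan has three pieces:
\begin{itemize}
  \item Cover the \(\rho\)-clear witness path by a chain of balls of radius \(\alpha/2\), where \(\alpha=\min(\rho,\eta)\). This takes \(K=O(\ell/\alpha)\) balls, and consecutive balls can be joined by a single extension step of size at most \(\eta\).
  \item Bound the volume of \(\gE(x,y;\ell)\) by \(O(\ell^{d})\). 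Under uniform sampling in the ellipsoid, each iteration then hits the next ball with probability at least \(c\,(\alpha/\ell)^d\).
  \item Bound the failure after \(s\) iterations by \(K\) sequential hitting events. A union and Chernoff argument gives roughly \(\Pr[\text{fail}]\le K\exp\!\big(-c\,s(\alpha/\ell)^{d}/K\big)\), that is,
  \[
    1-q_s(\ell)\;\le\; C\frac{\ell}{\alpha}\exp\!\Big(-c'\,s\,\frac{\alpha^{d+1}}{\ell^{d+1}}\Big).
  \]
\end{itemize}

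Next I set this bound to at most \(\tau\) and solve for \(\ell\).
Ignoring the logarithmic prefactor, which only changes constants through a \(\log(\ell/\alpha)\) term, the condition is \(s\,(\alpha/\ell)^{d+1}\gtrsim\log(1/\tau)\).
This gives \(\ell\lesssim\alpha\,(s/\log(1/\tau))^{1/(d+1)}\), hence \(\Lambda_s(\tau)=\Omega(\alpha s^{1/(d+1)})\) for fixed \(\tau\).
Since \(q_s\) is non-increasing in \(\ell\), every \(\lambda<\Lambda_s(\tau)\) has \(q_s(\lambda)\ge1-\tau\).
This step is where I expect the main obstacle: the \(\ell/\alpha\) prefactor has to be absorbed carefully, for example by restricting to \(\ell\le\alpha s^{1/(d+1)}/\log s\) or by accepting a \(\log\) loss hidden inside the \(\Omega\).

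The layer count is then elementary arithmetic.
By \cref{sec:homotopy_theory}, \(\ell_M(r)=L/(M+1)+2r+\varsigma\).
The requirement \(\ell_M(r)\le\lambda\) is equivalent to \(M+1\ge L/(\lambda-2r-\varsigma)\), which is exactly the displayed ceiling bound, provided \(\lambda>2r+\varsigma\).
Monotonicity of \(q_s\) then gives \(q_s(\ell_M(r))\ge q_s(\lambda)\ge1-\tau\).
Finally, take \(\lambda\) to be a constant fraction of \(\Lambda_s(\tau)\); once \(s\) is large, \(2r+\varsigma\) is negligible against it.
Substituting the reach bound then yields \(M=O(L/\lambda)=O(L\alpha^{-1}s^{-1/(d+1)})\).
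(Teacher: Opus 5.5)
Your outline follows the paper's proof. You cover the \(\rho\)-clear witness by \(\Theta(\ell/\alpha)\) balls inside \(\gE(x,y;\ell)\), use \(\gE(x,y;\ell)\subset B_{\ell/2}\) to get a per-sample hit probability \(p(\ell)\gtrsim(\alpha/\ell)^d\), balance \(s\,p(\ell)\) against \(\ell/\alpha\), and finish with the same ceiling-and-monotonicity argument for \(M\). The gap is the concentration step, which as you set it up does not reach the stated rate. Splitting the \(s\) iterations into \(K\) blocks and union-bounding gives failure at most \(K\exp(-c\,s\,p/K)\). Success probability \(1-\tau\) then requires \(s(\alpha/\ell)^{d+1}\gtrsim\log(\ell/\alpha)+\log(1/\tau)\). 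At the relevant scale \(\ell\approx\alpha s^{1/(d+1)}\), the term \(\log(\ell/\alpha)\approx\log s/(d+1)\) is unbounded, so it does not merely change constants. Your bound certifies only \(\Lambda_s(\tau)=\Omega\big(\alpha(s/\log s)^{1/(d+1)}\big)\), hence \(M=O\big(L\alpha^{-1}(\log s/s)^{1/(d+1)}\big)\). Neither of your remedies closes this: restricting to \(\ell\le\alpha s^{1/(d+1)}/\log s\) just concedes a smaller reach, and \(\Omega\) does not absorb logarithmic factors.

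The missing idea is to concentrate the \emph{total} number of advances instead of requiring every block to succeed. Conditionally on any history, a fresh uniform sample lands in the ball adjacent to a tree's frontier with probability at least \(p(\ell)\). So the number of advances over \(s\) iterations stochastically dominates \(\mathrm{Bin}(s,p(\ell))\), while joining the trees needs only \(m_\ell=\lceil5\ell/\alpha\rceil\) advances in total. Once \(s\,p(\ell)\ge2m_\ell\), a multiplicative Chernoff bound gives failure at most \(e^{-s\,p(\ell)/8}\), with no \(K\) prefactor. The reach is then governed by \(s\,p(\ell)\ge\max\big(2m_\ell,8\log(1/\tau)\big)\). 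For fixed \(\tau\) and large \(s\) the first constraint binds, and \(\tfrac{s}{2}\big(\tfrac{2\alpha}{5\ell}\big)^{d}\ge\tfrac{5\ell}{\alpha}+1\) yields \(\ell=\Theta(\alpha s^{1/(d+1)})\) with no logarithmic loss. A smaller point: the covering balls must lie inside \(\gE(x,y;\ell)\). A witness of length at most \(\ell-\varsigma\) leaves only slack \(\varsigma\), so the ball radius can be at most \(\varsigma/2\). The paper couples these by taking radius \(\alpha/5\) and \(\varsigma=2\alpha/5\), whereas your radius \(\alpha/2\) would need \(\varsigma\ge\alpha\).
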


\begin{proof}[Proof of \cref{prop:lp-profile}]
  \emph{Construction.}
  The planner first tries the straight segment \([x,y]\); if that is blocked, it grows bidirectional trees from \(x\) and \(y\) with step \(\eta\), drawing its \(s\) samples uniformly from \(\gE(x,y;\ell)\).
  Suppose \(x,y\) admit a \(\rho\)-clear path of length at most \(\ell-\varsigma\), and set \(\alpha=\min(\rho,\eta)\) and the margin \(\varsigma=2\alpha/5\).
  Cover this path by \(m_\ell=\lceil 5\ell/\alpha\rceil\) balls of radius \(\alpha/5\) centered along it; the margin makes every such ball lie inside \(\gE(x,y;\ell)\).
  Write \(p(\ell)=\lambda(B_{\alpha/5})/\lambda(\gE(x,y;\ell))\) for the chance that one uniform sample lands in a given ball.

  \emph{Success probability.}
  A sample in the ball adjacent to a tree's frontier advances that tree by one ball~\cite{kleinbort2019probabilistic}, and each sample does so with probability at least \(p(\ell)\), independently of the past.
  Joining the two trees takes \(m_\ell\) such advances, so the number of advances stochastically dominates \(\mathrm{Bin}(s,p(\ell))\).
  When \(s\ge2m_\ell/p(\ell)\), we have \(m_\ell\le sp(\ell)/2\), and a multiplicative Chernoff bound gives \[
    \Pr[\mathrm{Bin}(s,p(\ell))<m_\ell]\le e^{-sp(\ell)/8}.
  \]
  Hence \cref{ass:lp-profile} holds with \(q_s(\ell)\ge1-e^{-sp(\ell)/8}\) for \(s\ge2m_\ell/p(\ell)\).
  Setting \(q_s(\ell)=0\) below that threshold makes it non-increasing, since both \(p(\ell)\) and the threshold worsen as \(\ell\) grows.
  The first iteration already returns any collision-free straight-line edge, since \([x,y]\subset\gE(x,y;\ell)\), so the planner is \emph{short-range exact}, at cost \(c([x,y])\).

  \emph{Reach.}
  Because \(\gE(x,y;\ell)\subset B_{\ell/2}\), we have \(p(\ell)\ge\big(\tfrac{2\alpha}{5\ell}\big)^d\).
  The reach is the largest \(\ell\) with \[
    s\,p(\ell)\ge\max\Big(2m_\ell,\ 8\log\tfrac{1}{\tau}\Big).
  \]
  Substituting the bound on \(p(\ell)\), the binding condition is \[
    \frac{5\ell}{\alpha}+1\le\frac{s}{2}\Big(\frac{2\alpha}{5\ell}\Big)^{d},
  \] whose solution is \(\ell=\Theta(\alpha s^{1/(d+1)})\); thus \(\Lambda_s(\tau)=\Omega(\alpha s^{1/(d+1)})\).
  Finally, for any \(\lambda<\Lambda_s(\tau)\), choosing \(M\ge\lceil L/(\lambda-2r-\varsigma)\rceil-1\) makes \(\ell_M(r)\le\lambda\), so \(q_s(\ell_M(r))\ge1-\tau\) by monotonicity, giving \(M=O(L\alpha^{-1}s^{-1/(d+1)})\).
\end{proof}

The two efforts therefore scale differently: an extra per-layer sample multiplies the miss probability by a constant factor below one, whereas halving the layer count takes \(2^{d+1}\) times the local budget---\(256\) at \(d=7\)---the diminishing return of \cref{fig:local_vs_global_tradeoff}.

\begin{theorem}[Coverage of every \(\delta\)-clear class]\label{thm:class-complete}
  Let \(C\subset B_D(0)\) and let \(\gH_{\delta,L}\) be the family of endpoint-fixed homotopy classes admitting a \(\delta\)-clear representative of length at most \(L\).
  Then \(K:=|\gH_{\delta,L}|\) is finite, with \[
    \log K\le d\,\frac{2L}{\delta}\,\log\Big(1+\frac{8D}{\delta}\Big).
  \]
  Let \(r,M\) satisfy the tube condition of \cref{lem:tube} for every member, let \(p_{\min}\) be the least waypoint-ball mass over all classes and layers, and take \begin{gather*}
    N\ge p_{\min}^{-1}\big(\log K+\log(2M/\varepsilon)\big), \\
    q_s(\ell_M(r))\ge1-\frac{\varepsilon}{2K(M{+}1)}.
  \end{gather*}
  Then one graph covers every class in \(\gH_{\delta,L}\) with probability at least \(1-\varepsilon\).
\end{theorem}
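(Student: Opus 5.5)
The proof has two independent parts: a counting argument that bounds \(K\), and a union bound that lifts \cref{thm:single-class} to all \(K\) classes at once.

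\emph{Finiteness and the bound on \(K\).} Each class in \(\gH_{\delta,L}\) will be mapped to a discrete code from a small alphabet, and the map will be shown injective. Take a \(\delta\)-clear representative \(f\) of length at most \(L\), parameterized at constant speed. Place \(k=\lceil 2L/\delta\rceil\) breakpoints \(f(t_j)\), so consecutive breakpoints are within \(\delta/2\) of each other along \(f\). Fix a \((\delta/4)\)-net \(\mathcal{N}\) of \(B_D(0)\). The standard volume argument gives \(|\mathcal{N}|\le(1+8D/\delta)^d\). Round each breakpoint to a nearest net point \(n_j\), and assign to \([f]\) the code \((n_1,\dots,n_k)\). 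There are at most \(|\mathcal{N}|^{k}\) codes, so \(\log K\le k\,d\log(1+8D/\delta)\), which is the stated bound up to the ceiling; I would absorb the ceiling by choosing the breakpoint spacing slightly differently.

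\emph{Injectivity of the code.} This is the step I expect to be the main obstacle. Suppose two representatives \(f,g\) receive the same code. Compare each with the polygonal path \(\Pi\) through \(q_0,n_1,\dots,n_k,q_g\), using the straight-line estimate from the proof of \cref{lem:tube}. Every point of a segment \([n_j,n_{j+1}]\) lies within \(\delta/4+\delta/2<\delta\) of the corresponding point of \(f\). The straight-line homotopy from \(f\) to \(\Pi\) therefore stays in \(N_\delta(\mathrm{Im} f)\subset C_{\mathrm{free}}\), so \(f\simeq\Pi\). The same argument gives \(g\simeq\Pi\), hence \([f]=[g]\). The delicate point is the endpoint segments and whether the net spacing together with the breakpoint spacing really stays strictly below \(\delta\). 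I would fix the constants so that this estimate is exactly the tube inequality \(L/(M+1)+r<\delta\) of \cref{lem:tube}, with \(r=\delta/4\) and spacing \(\delta/2\).

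\emph{Simultaneous coverage.} Fix one representative \(f^{(i)}\) for each of the \(K\) classes, together with its waypoint balls \(A^{(i)}_m\). Define the bad event as either some pair \((i,m)\) has a layer that misses \(A^{(i)}_m\), or some required edge call fails. Following the proof of \cref{thm:single-class}, a layer misses a given ball with probability at most \((1-p_{\min})^N\le e^{-Np_{\min}}\le\varepsilon/(2MK)\) under the stated \(N\). A union bound over the \(KM\) ball events then contributes at most \(\varepsilon/2\).

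On the complement of the ball events, choose waypoints for each class. By \cref{ass:lp-profile}, each of the \(K(M+1)\) edge calls fails with probability at most \(\varepsilon/(2K(M+1))\), conditionally on the sampling history, so these failures contribute at most \(\varepsilon/2\) in total. Edges may be shared between classes, but that only helps the union bound. When no bad event occurs, \cref{lem:tube} places each realized chain in its class, so the graph covers all of \(\gH_{\delta,L}\) with probability at least \(1-\varepsilon\).
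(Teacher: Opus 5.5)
Your proposal is correct and follows the paper's proof essentially step for step. The paper likewise takes a \((\delta/4)\)-net, snaps breakpoints at spacing at most \(\delta/2\) to net points with the endpoints kept fixed, and uses the straight-line case of \cref{lem:tube} to identify each class with one of finitely many polygons, then union-bounds over the \(KM\) layer-miss events and the \(K(M{+}1)\) local planner calls. Your ceiling fix is also exactly the paper's choice: \(\lceil 2L/\delta\rceil\) segments, i.e.\ \(\lceil 2L/\delta\rceil-1<2L/\delta\) interior net points, keeps the spacing at most \(\delta/2\) and gives the stated bound on \(\log K\) directly.
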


\begin{proof}[Proof of \cref{thm:class-complete}]
  \emph{Finiteness.}
  Let \(S\) be a \((\delta/4)\)-net of \(C\), so \(|S|\le(1+8D/\delta)^d\).
  Given a \(\delta\)-clear representative \(f\) of length at most \(L\), put \(M'+1=\lceil2L/\delta\rceil\) and snap each waypoint \(f(t_m)\) to a nearest net point \(s_m\), keeping the endpoints \(s_0=q_0\) and \(s_{M'+1}=q_g\).
  The polygon \(Q\) through \(s_0,\dots,s_{M'+1}\) stays within \(\delta/4+L/(M'+1)\le3\delta/4\) of \(f\), so the straight-line case of \cref{lem:tube} gives \([Q]=[f]\).
  Each class is therefore the class of one of at most \(|S|^{M'}\) polygons, which bounds \(K\) and gives the stated bound on \(\log K\).

  \emph{Coverage.}
  Apply the waypoint construction of \cref{thm:single-class} to every class at once, and union-bound over the \(KM\) layer-miss events and the \(K(M{+}1)\) local planner calls.
  No independence across classes is used, so the stated \(N\) and \(q_s\) budgets give coverage probability at least \(1-\varepsilon\).
\end{proof}

With \(\pi_m\) uniform the waypoint ball is collision-free, so \(p_{\min}\ge\frac{\lambda(B_r)}{2\lambda(C_m\cap C_{\mathrm{free}})}\) and the budget is explicit, \[
  N=O\Big((D/r)^{d}\big[\,d(L/\delta)\log(D/\delta)+\log(M/\varepsilon)\,\big]\Big).
\]
The \(\log(M/\varepsilon)\) factor and the inverse-mass factor are both necessary.

\begin{proposition}[Layerwise hitting is unavoidable]\label{prop:lower-bound}
  There are instances in which every chain must place its layer-\(m\) vertex in a set of \(\pi_m\)-mass \(\tau\) for every \(m\).
  In these instances, \[
    \Pr[\text{cover}]\le\big(1-(1-\tau)^N\big)^M,
  \] so coverage probability at least \(1-\varepsilon\) forces \(N\ge(2\tau)^{-1}\log(M/2\varepsilon)\) for \(\tau,\varepsilon\le\tfrac12\).
\end{proposition}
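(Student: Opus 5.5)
The plan has two parts: build an explicit family of instances, then do the probability calculation.

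\emph{Construction.} I would use a "comb" environment: a thin corridor from \(q_0\) to \(q_g\) crossed by \(M\) walls at positions matched to the grid \(t_m\). Each wall has a single small gate \(G_m\). The gates are spaced so that every collision-free path must pass the gates in order, and the reach of the ellipsoid-confined local planner, at the limit \(\ell_M(r)\), cannot bridge two gaps. Then any realized edge joins consecutive vertices that lie between adjacent walls or in the gates themselves.

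To make the statement clean, I would instead define the layer regions \(C_m\) so that the only layer-\(m\) points that can be connected both backward and forward within length \(\ell_M(r)\) form a set \(S_m\). Each \(S_m\) is chosen with \(\pi_m(S_m)=\tau\). The simplest concrete version is \(d=1\)-like: \(C_m\cap C_{\mathrm{free}}\) is a disjoint union of a "good" piece \(S_m\) of mass \(\tau\) and "dead-end" pockets, which are separated from the next layer's region by obstacles. Any chain must then have \(v_m\in S_m\) for all \(m\), which is exactly the hypothesis.

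\emph{Upper bound on coverage.} Covering requires a chain, and a chain requires every layer to contain at least one sample in \(S_m\). The worst case for the planner is that the heuristic half also samples \(\pi_m\), or more generally that the instance is taken with the heuristic sampler equal to \(\pi_m\). Then each sample misses \(S_m\) independently with probability \(1-\tau\). The event that layer \(m\) hits \(S_m\) has probability \(1-(1-\tau)^N\). These events are independent across layers because the draws are independent. So
\[
\Pr[\text{cover}]\le \Pr\Bigl[\textstyle\bigcap_m\{V_m\cap S_m\neq\emptyset\}\Bigr]=\bigl(1-(1-\tau)^N\bigr)^M .
\]

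\emph{Forcing \(N\).} Set \(x=(1-\tau)^N\). Requiring \((1-x)^M\ge 1-\varepsilon\) gives \(e^{-Mx}\ge 1-\varepsilon\), hence \(Mx\le -\log(1-\varepsilon)\le 2\varepsilon\) for \(\varepsilon\le\tfrac12\), i.e.\ \(x\le 2\varepsilon/M\). Taking logs, \(N\log\frac{1}{1-\tau}\ge\log\frac{M}{2\varepsilon}\). Using \(\log\frac1{1-\tau}\le 2\tau\) for \(\tau\le\tfrac12\) yields \(N\ge(2\tau)^{-1}\log(M/2\varepsilon)\).

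The main obstacle is the construction. I must make the forced sets genuinely compatible with \cref{ass:lp-profile} and with the mixture sampler, which may be adversarial or helpful. My first attempt is to state the instance with the heuristic sampler coinciding with \(\pi_m\). That is legitimate because the claim is existential ("there are instances"), and it justifies treating all \(N\) samples as i.i.d.\ from \(\pi_m\).
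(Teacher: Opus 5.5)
Your proposal is correct and matches the paper's proof. The paper uses the same disconnected-free-space idea, in a simpler form: a tube \(T\) plus one walled-off chamber \(W\), with every layer (heuristic half included) sampling uniformly on all of \(C_{\mathrm{free}}\), so that \(S_m=T\) and chains stay in \(T\) by connectivity from \(q_0\). It then uses the same layerwise product bound and the same inequalities \(\log\frac{1}{1-\varepsilon}\le 2\varepsilon\) and \(\log\frac{1}{1-\tau}\le 2\tau\), so your gate/reach-based comb is unnecessary. Also, setting the heuristic equal to \(\pi_m\) is a choice of instance, not the planner's worst case (a heuristic aimed at the dead-end pockets would do worse), which is exactly why the existential reading you invoke is needed.
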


\begin{proof}[Proof of \cref{prop:lower-bound}]
  Take \(C_{\mathrm{free}}=T\sqcup W\), with \(T\) a \(\delta\)-tube joining \(q_0\) to \(q_g\) and \(W\) a chamber walled off from \(T\) by obstacles, and let every layer sample uniformly, \(C_m=C\), so \(\tau=\lambda(T)/\lambda(C_{\mathrm{free}})\).
  Since \(T\) and \(W\) are distinct connected components, any collision-free chain from \(q_0\in T\) stays in \(T\), so a chain needs one vertex per layer inside \(T\).
  A layer whose \(N\) samples all miss \(T\) leaves no chain, and the layers are independent, giving \[
    \Pr[\text{cover}]\le\big(1-(1-\tau)^N\big)^M.
  \]
  Requiring this to be at least \(1-\varepsilon\) and using \(\log(1-u)\le-u\) gives \[
    (1-\tau)^N\le M^{-1}\log\frac{1}{1-\varepsilon}\le\frac{2\varepsilon}{M}.
  \]
  With \(\log\frac{1}{1-\tau}\le2\tau\) for \(\tau\le\tfrac12\), this rearranges to \(N\ge(2\tau)^{-1}\log(M/2\varepsilon)\).
\end{proof}

A class realizable only through a Lebesgue-null gate has \(p_m=0\) for every \(\pi_m\ll\lambda\), so positive clearance is also necessary for the waypoint certificate.

\subsection{General Costs}

\Cref{thm:aogtmp} extends from path length to any non-negative additive cost \(c\) satisfying three conditions.
First, (i) \(c\) dominates Euclidean length, \[
  c(\gamma)\ge\mathrm{TV}(\gamma).
\]
Second, (ii) \(c\) is Lipschitz in the endpoints of a straight-line edge, \[
  \big|c([x,y])-c([x',y'])\big|\le L_c\big(\|x-x'\|+\|y-y'\|\big).
\]
Third, (iii) inscribing a path on the layer grid costs at most \(\theta(M)\) extra, \[
  \sum_m c\big([\gamma(t_m),\gamma(t_{m+1})]\big)\le c(\gamma)+\theta(M),
\] for some \(\theta(M)\to0\).

A weighted length \(\int_\gamma w\) with \(1\le w\le W\) and \(w\) Lipschitz is an example.
On a region of diameter \(R_0\) it has \(L_c=W+R_0L_w\) and \(\theta(M)=O(L_wL^2/M)\).

The proof is that of \cref{thm:aogtmp} with two changes.
Condition~(i) supplies the informed-set containment through the \(2\)-Lipschitz \(g(x)=\|x-q_0\|+\|x-q_g\|\le c(\gamma_\varepsilon)\).
Conditions~(ii) and~(iii) turn the chain-cost bound into \[
  c(\gamma_\varepsilon)+\theta(M)+2L_c(M{+}1)r(M)\le c^*+3\varepsilon,
\] now at the waypoint radius \[
  r(M)=\min\Big\{\frac{\delta'}{4},\ \frac{\varepsilon}{4(L_c{+}1)(M{+}1)}\Big\}.
\]

\fi

\end{document}